\numberwithin{equation}{section}
\DeclareMathOperator{\E}{\mathbb{E}}
\DeclareMathOperator{\sgn}{sgn}
\DeclareMathOperator{\Span}{span}
\DeclareMathOperator{\conv}{conv}
\DeclareMathOperator{\sign}{sign}
\renewcommand{\Pr}[2][]{\mathbb{P}_{#1} \left\{ #2 \rule{0mm}{3mm}\right\}}
\newcommand{\ip}[2]{\langle#1,#2\rangle}
\newcommand{\Bigip}[2]{\Big\langle#1,#2\Big\rangle}
\def \P {\mathbb{P}}
\def \R {\mathbb{R}}
\def \K {\mathbb{K}}
\def \H {\mathbb{H}}
\def \EE {\mathcal{E}}
\def \FF {\mathcal{F}}
\def \a {\alpha}
\def \e {\varepsilon}
\def \l {\lambda}
\def \s {\sigma}
\def \tran {\mathsf{T}}
\def \psitwo {{\psi_2}}
\newcommand{\blue}{\textcolor{blue}}
\newtheorem{theorem}{Theorem}[section]
\newtheorem{corollary}[theorem]{Corollary}
\newtheorem{lemma}[theorem]{Lemma}
\newtheorem{definition}[theorem]{Definition}
\newtheorem{question}[theorem]{Question}
\theoremstyle{remark}
\begin{document}

\title{A Theory of Capacity and Sparse Neural Encoding}

\author{Pierre Baldi \and Roman Vershynin}
\date{\today}

\address{Department of Computer Science, University of California, Irvine}
\email{pfbaldi@uci.edu}

\address{Department of Mathematics, University of California, Irvine}
\email{rvershyn@uci.edu}

\thanks{The work of R.V. is in part supported by U.S. Air Force grant FA9550-18-1-0031. The work of P.B. is in part supported by grant ARO  76649-CS.}

\begin{abstract}
Motivated by biological considerations, we study sparse neural maps from an input layer to a target layer with sparse activity, and specifically the problem of storing $K$ input-target associations $(x,y)$, or memories, when the target vectors $y$ are sparse. We mathematically prove that $K$ undergoes a phase transition and that in general, and somewhat paradoxically, sparsity in the target layers increases the storage capacity of the map. The target vectors can be chosen arbitrarily, including in random fashion, and the memories can be both encoded and decoded by networks trained using local learning rules, including the simple Hebb rule. These results are robust under a variety of statistical assumptions on the data.
The proofs rely on elegant properties of random polytopes and sub-gaussian random vector variables. Open problems and connections to capacity theories and polynomial threshold maps are discussed. 
\end{abstract}

\maketitle


\section{Introduction}

Sparse representations of information are often observed in biological and artificial neural systems, and in other statistical systems as well. Arguments in support of sparsity range from low energy consumption in physical systems to interpretability in artificial models. Furthermore, sparsity can be an emergent properties, or it can be artificially designed, typically by including penalty functions that favor sparsity.
Here we study sparse encoding of information in neural maps and analyze their properties and possible computational advantages, particularly from a storage viewpoint. 

\subsection{Biological Sparsity}

Many examples of sparse representations 
in neurobiology are found, for instance, in the early processing stages of sensory systems,
across both sensory modalities and biological organisms. Together with a change in the activity pattern, from a dense input representation  to a sparse target representation in response to a stimulus,
one often observes also a significant expansion in the number of active neurons in the target layer.

For example, in the mouse visual system there are about 
20,000 projecting neurons in the dorsal Lateral Geniculate Nucleus (dLGN)
\cite{evangelio2018thalamocortical} whereas there are 
120,000-215,000 neurons in mouse primary visual cortex area V1,
where sparse activity is observed (\cite{srinivasan2015predicting} and references therein).
In the cat visual cortex, a 25-fold expansion is observed between the number of axons leaving V1 and the number of axons entering V1 from the LGN.
However, only 5--10\% of V1 neurons respond to any natural scene stimulus \cite{olshausen2004sparse}. 
The following additional examples are extracted from 
\cite{babadi2014sparseness}.
In the olfactory system of the fly, 
the antenna lobe comprising 50 glomeruli projects to the mushroom body containing about 2,500 Kenyon cells. When an odorant stimulus is presented, 59\% of the projection neurons and only 6\% of the Kenyon cells are activated \cite{turner2008olfactory}. Likewise, in rodents, the 
olfactory bulb projects to the piriform cortex 
\cite{mombaerts1996visualizing}, which hosts millions of pyramidal neurons,roughly three orders of magnitude more than the number of glomeruli in the bulb. While the response of the neurons in the olfactory bulb to odorant stimuli is quite dense \cite{vincis2012dense}, 
 only about 10\% of the neurons in the piriform cortex show an evoked response to each odorant 
 \cite{stettler2009representations,poo2009odor}.
Similar ratios are observed in the somatosensory system    
   \cite{brecht2002dynamic}, the auditory system \cite{deweese2003binary},
   and even the electrosensory system of electric fish    \cite{chacron2011efficient}.
   
The fact that the same basic strategy seems to have emerged in evolution across a variety of organisms and sensory systems requires an explanation and suggests that this strategy may have specific advantages. There have been attempts, for instance, to explain the emergence of sparse representations in V1 as reflecting the sparse, largely statistically independent components of natural images \cite{olshausen1996emergence,
bell1997independent}. However these arguments do not necessarily apply to other sensory system, or explain why a sparse basis is chosen over a dense basis that could be more compact or combinatorially richer,  or justify the expansion aspect of the strategy.

\subsection{Computational Sparsity}

On the computational side, sparsity has been studied in several different settings. 
Regularization terms, or prior distributions, associated with the L1 norm tend to produce sparsely parameterized models where a subset of the parameters are equal to zero, which can increase interpretability in some situations. The L1 approach goes back at least to work done in geology in the 1980s,
\cite{santosa1986linear} and has been further 
developed and publicized under the name of LASSO  (least absolute shrinkage and selection operator)
\cite{tibshirani1996regression} (see also 
\cite{tipping2001sparse}). Many other sparsity-related priors have been developed in recent years. 
An example of continuous ``shrinkage'' prior centered at zero is the horseshoe prior
\cite{carvalho2009handling,carvalho2010horseshoe}.
However technically these continuous priors do not have a mass at zero. Thus another alternative direction is to use discrete mixtures \cite{mitchell1988bayesian,george1993variable} where the prior on each weight $w_i$ consists of a mixture of a point mass at $w_i=0$ with an absolutely continuous distribution. A similar approach, applied to pixel intensities, rather than weights, has been developed recently to construct effective generative models of very sparse images \cite{lu2020sarm}.
Finally, there is a significant literature in compressed sensing research, where efficient sparse coding algorithms have been developed for recovering sparse signals that underwent linear compression 
\cite{donoho2006compressed,candes2006compressive,rozell2008sparse, eldar2012compressed,ganguli2012compressed,foucart2013invitation,plan2013robust,
ai2014one,plan2014dimension,plan2016generalized}.

Our main goal in this work is to better understand the computational role of sparsity in neuronal maps. Our work is closest in spirit to \cite{babadi2014sparseness}, but with a number of significant differences. First, although we discuss expansion issues, our primary focus here is on sparsity, not on expansion. Second our goal is to understand the possible computational advantages of sparsity. And Tthird, our approach is mathematical and aimed at deriving precise theorems, as opposed to approximate results derived using physics approximations or computer simulations.

\section{Basic Framework and Notation}
\label{sec:basic}

\subsection {Neural Maps and Threshold Functions} We wish to understand neural mappings $F$ from a layer of size $n$ to a layer of size $m$.
For simplicity, we call the layer of size $n$ the {\it input layer}, and the layer of size $m$  the {\it target layer} and the resulting architecture an $A(n,m)$ architecture. 
The mapping is to be implemented by $m$ linear threshold functions--as one of the simplest neuronal models--although we will briefly consider other computational units, such as polynomial threshold functions of low degree
\cite{baldi2021deep}. We let 
$\mathcal{T}(n,m)$ denote the set of all such linear threshold maps, and 
$\mathcal{T}^d(n,m)$ denote the set of all such threshold maps of degree $d$.
As a result, the activities in the target layers are always binary with value 0 or 1.
When the activities in the layer of size $n$ are also binary with values in $\{0,1\}$ or 
$\{-1,+1\}$, the units in the layer of size $m$ implement Boolean functions and we call them linear threshold gates, or polynomial threshold gates in the polynomial case.
We let $\H^n=\{0,1\}^n$ denote the $n$-dimensional hypercube with individual coordinates in $\{0,1\}$. It is sometimes more convenient to consider input vectors in   
$\K^n=\{-1,1\}^n$, the $n$-dimensional hypercube with individual coordinates in $\{-1,1\}$. A simple affine transformation transforms one type of hypercube into the other, and such transformations can be absorbed into the weights of the threshold functions, so any result obtained with a threshold map applied to input vectors in $\H^n$ can be transformed into an equivalent result with input vectors in $\K^n$ and vice versa.

\subsection{Input and Target Models}
\label{s: models}
In general, we imagine that the input layer is presented with dense input vectors $x$ of length $n$, and we want to explore their mapping into sparse representations $y$ of length $m$ in the target layer. To generate dense input vectors $x$, one can consider different models, in both the continuous and binary cases, including the following ones:
\begin{enumerate}
\item Gaussian Model $[{ N}(0,1)]^n$ in which the components of $x$ are independent identically distributed with standard normal distribution.
\item Uniform Model $U[S(n-1)]$ in which $x$ is sampled uniformly over the unit sphere in $n$-dimensional Euclidean space.
\item Bernoulli Model  $[{ B}(\frac{1}{2})]^n$ in which the components of $x$ are independent identically distributed with symmetric Bernoulli coin flip distribution with parameter p=0.5.
\item Uniform Model ${U}(\frac{1}{2},n)$ which corresponds to a uniform distribution over all vectors of length $n$ containing $n/2$ ones and $n/2$ zeros. The fact that $n$ may be odd is not important for our considerations (in this case use the the floor and ceiling operators).
\end{enumerate}
Some of the same notation and models can be used also to generate sparse vectors, so that we let:
\begin{enumerate}
\item Sparse Bernoulli Model $[{B}(p]^n$ in which the components of $x$ are independent and identically distributed with 
probability $p$ of being one (and zero otherwise), with $p$ small.
\item 
Sparse Uniform: ${U}(p,n)$ in which $x$ is sampled uniformly over the binary vectors
of $\H^n$ having a fraction 
$p$ of their entries equal to one, and the rest equal to zero, with $p$ small. 
There are of course ${n \choose np}$ such vectors, with the same remark as above regarding the use of the floor ceiling operators when $np$ is not an integer.
\end{enumerate}
Although these sparse models can also be applied to the input layer, they are meant to be applied primarily to the target layer, replacing $n$ with $m$, and $x$ with $y$.
While for certain mathematical considerations one model may be easier to use than the other, it is well known that for many probabilistic considerations, especially in terms of asymptotic results, the corresponding Bernoulli and Uniform models are very similar and that 
 $[{B}(p]^n$ is a slightly ``smeared'' version of 
${U}(p,n)$. In particular, all the vectors with $pn$ components equal to one have the same probability in  $[{B}(p)]^n$, but this probability is slightly lower compared to the corresponding uniform model due to the smearing.
Most importantly, we will also consider models, other than the uniform models, where the components of $x$ or $y$ are not independent of each other, or where $x$ and $y$ are not independent of each other.

Whatever the model, in the end we assume that we have a set of memories, or training set, consisting of $K$ pairs $(x,y)$, and one of our main goals is to find the maximal number $K$ of memories that can be stored in the neural map. 

A Boolean vector of size $n$ is called $p$-sparse if it contains $pn$ ones, and $n(1-p)$ zeros. 
Likewise, we call a Boolean function of $n$ variables $p$-sparse if its vector of assignment or targets (corresponding to the last column of its truth table)
is $p$-sparse, i.e. the function takes the value $1$ for $p2^n$ entries, and $0$ otherwise.
In general, we will use $p$ and $q$ to denote unrelated probabilities (and thus it is {\it not} the case that $q=1-p$). Finally, in order to avoid the use of double indexes, we use $x_1, x_2, ....,x_K$ to denote the set of input training examples, and  $x_1, x_2, ....,x_n$
to denote the components of an input vector $x$. Whenever this notation is used, its meaning should be obvious from the context.

\subsection{Storage and Memories} Now let us assume that we have $K$ (dense) real-valued or binary vectors $x$ in the input layer, and that we want to map them into $K$ (sparse) binary vectors, or representations, $y=F(x)$ in the target layer.
The $K$ associations $(x,y)$ are called memories and, for concreteness, the reader may think of $x$ as the activity triggered by an odor in a primary sensory layer, and of $y$ as its sparse representation in a subsequent layer.
In this work, we are concerned primarily with maximizing $K$, i.e. the number of memories that are stored in the mapping and the effects that the size
$m$ of the target layer, and the sparsity of the vectors $y$, have on the mapping. There are two additional properties of the mapping $F$ that are important: continuity and un-ambiguity. By continuous, we mean that if $x$ is one of the input memories and $x'$ is close to $x$, then in general one should expect $F(x')=F(x)$, i.e. the odors of two slightly different bananas should be mapped to identical (or very similar) binary representations. Using linear threshold functions automatically enforces continuity almost everywhere.
%
By un-ambiguity, we mean that the target vectors $y$ should be far apart from each other to avoid any possibility of confusion
(the binary representation of the banana odor should not be confused with the binary representation of the odor of any other fruit). This can be formalized for instance by maximizing the average Hamming distance between the vectors $y=F(x)$. In short we want a map $F$ that has maximal memory storage, that is also continuous and un-ambiguous. In the rest of the paper we will prove that maximizing memory storage leads to sparsity in the target layer and suggest that large target layers support un-ambiguity.

\subsection{Paradox} It must be noted from the outset that the maximization of memory storage by sparse neural maps has a paradoxical flavor. For simplicity, let us assume that we want to encode the $K$ input vectors into $K$ $p$-sparse vectors in the target layer. The total number of such possible vectors is given by $m \choose pm$ and this number is maximal when $p=0.5$. In other words there are far more possibilities for choosing the target $y$ vectors when the target vectors are constrained to be dense. 
Likewise, the total number of $p$-sparse Boolean functions of $n$ Boolean variables is given by ${2^n \choose p2^n}$, which is also maximal when $p=0.5$, providing also the impression that dense representations 
offer more choices and are easier to realize. 

\subsection{Resolution} The resolution of this paradox must come from the constraints we placed on the function $F$. In particular, consider a single linear threshold function or gate, with $K$ random input vectors of size $n$. Assume that the targets are assigned randomly with sparsity $p$. Equivalently, assume that the $K$ points are colored randomly in black and white, where $p$ is the probability of assigning a white color. When are the black and white points linearly separable?
 If $p=0.5$, 
we know \cite{baldi2019capacity} that the maximal number of random memories that can be stored satisfies $K \approx n$ (related results are known also for polynomial threshold functions \cite{baldi2019polynomial}). On the other hand, in the binary case, if only one target is equal to 1 and all the other targets are 0, it is easy to see that any $K$ memory associations can be realized, i.e. it is always possible to separate one corner of the hypercube from all the other corners using a hyperplane. Thus, in  a sense this extreme case of sparsity leads to greater storage, i.e. greater values of $K$. In short, it is intuitively clear that the smaller the fraction of white points is, the greater its chance of being linearly separable. Thus what is needed is a quantitative understanding of this phenomenon. As we are going to describe, the solution of this problem is closely related to the theory of random polytopes and is characterized by a phase transition. 

\section{Phase Transition}

We now provide a formal definition for the neural maps of interest and the underlying question we wish to address.

\begin{definition}[Threshold maps]		\label{def: threshold map}
  A map $F: \R^n \to \{0,1\}^m$ is called a {\em linear threshold map} if 
  all $m$ components of $F$ are linear threshold functions. 
  Equivalently, $F$ is a threshold map if it can be expressed as:
  $$
  F(x) \coloneqq h \big( Wx - b \big)
  $$
  for some $m \times n$ matrix $W$ and some vector $b \in \R^m$,
  where $h$ is the Heaviside function applied component-wise. The Heaviside function has value $0$ for negative arguments, and $1$ for positive arguments.
\end{definition}

Note that the bias can also be included in the weights $W$ by assuming there is one additional input unit always clamped to one. Likewise, we can define polynomial threshold maps of degree $d$ if all $m$ components of $F$ are polynomial threshold functions of degree $d$. We let $\mathcal{T}^d(n,m)$ denote the set of all such threshold maps. 
In this case, each component $i$ has the form: $f_i(x) = h(P_d(x))$ where $P_d$ is a polynomial of degree $d$ in the variables $x_1,\ldots x_n$ and $h$ is the Heaviside function. 

\begin{question}		\label{q: Tnm}
  Let $x \in \R^n$ and $y \in \{0,1\}^m$ be random vectors, possibly dependent. 
  Consider a sample of $K$ independent data points $(x_k, y_k)$ 
  drawn from the distribution of $(x,y)$.
  Does there exists a threshold map $F \in \mathcal{T}(n,m)$ such that:
  $$ 
  F(x_k) = y_k
  \quad \text{for all} \quad k=1,\ldots,K?
  $$
\end{question}


If we require $F: \R^n \to \R^m$ to be a {\em linear} map (and the distribution of $x$
is non-degenerate, e.g. absolutely continuous) then 
the answer to Question~\ref{q: Tnm} is Yes if and only if $K \le n$. 
Remarkably, for a larger class of {\em linear threshold} maps, one can fit 
samples of size much larger than $n$.

\begin{theorem}[Phase transition]			\label{thm: phase transition}
  Assume that $x$ is a standard normal random vector in $\R^n$
  and $y$ is an independent vector in $\{0,1\}^m$ whose coordinates 
  are i.i.d. Bernoulli with parameter $q \in (0,1)$. 
  Fix $\e \in (0,1)$ and let $n \to \infty$, allowing $m$, $K$ and $q$ depend on $n$.
  Assume that $K \gg n$ and $Kq \gg \log m$.
  \begin{enumerate}[1.]
    \item \label{item: phase transition small}
      If $2Kq\log(K/n) (1+\e) < n$ 
      then the answer to Question~\ref{q: Tnm} is {\em Yes}
      with probability $1-o(1)$.
    \item \label{item: phase transition large}
      If $2Kq\log(K/n) (1-\e) > n$ 
      then the answer to Question~\ref{q: Tnm} is {\em No}
      with probability $1-o(1)$.
  \end{enumerate}
\end{theorem}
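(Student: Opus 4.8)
The plan is to reduce Question~\ref{q: Tnm} to a statement about one linear threshold function, and then to a statement about random polytopes. A linear threshold map acts coordinate by coordinate, and the coordinates of $y$ are i.i.d.\ and independent of $x$; hence, once we condition on the input sample $x_1,\dots,x_K$, the problem splits into $m$ independent copies of the following one: given labels $\sigma_1,\dots,\sigma_K$ that are i.i.d.\ Bernoulli$(q)$, can the induced $2$-colouring of $x_1,\dots,x_K$ be realized by one linear threshold function? Writing $p=p(x_1,\dots,x_K)$ for the conditional probability (over the colouring) that this one-dimensional problem has answer Yes, we obtain
\[
  \Pr{\exists\, F\in\mathcal{T}(n,m):\ F(x_k)=y_k\ \text{ for all }k}\;=\;\E\big[\,p(x_1,\dots,x_K)^{\,m}\,\big].
\]
Everything thus comes down to estimating $1-p$, the conditional failure probability of the one-dimensional problem, sharply enough that $\E[p^m]\to 1$ in Case~\ref{item: phase transition small} and $\E[p^m]\to 0$ in Case~\ref{item: phase transition large}. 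The hypothesis $K\gg n$ guarantees that the logarithm appearing below is large, and $Kq\gg\log m$ guarantees that $1-p$ beats $1/m$ with room to spare; these are exactly what the transfer $p\mapsto p^m$ needs.

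For the one-dimensional problem, let $W$ be the set of points coloured $1$ (so $|W|$ is $\mathrm{Bin}(K,q)$, concentrated at $Kq$) and $B$ the rest, $|B|=(1-o(1))K$. By linear-programming duality (the separating-hyperplane theorem), the colouring is realizable by a linear threshold function if and only if $\conv(W)\cap\conv(B)=\emptyset$, i.e.
\[
  \max_{\theta\in S^{n-1}}\Big[\,\min_{v\in W}\ip{\theta}{v}-\max_{b\in B}\ip{\theta}{b}\,\Big]\;>\;0.
\]
Two facts about Gaussian (more generally, sub-gaussian) point clouds govern this maximum. First, since $|W|=o(n)$ in the relevant range, with high probability the Gram matrix of the white points concentrates around $(1+o(1))\,n\,I$, so that $\max_{\theta\in\Span(W)}\min_{v\in W}\ip{\theta}{v}=(1+o(1))\sqrt{n/|W|}$, attained at the balanced direction $\propto\sum_{v\in W}v/\|v\|$; and for a general unit vector $\theta=t\theta_W+s\theta_\perp$ with $\theta_W\in\Span(W)$, $\theta_\perp\perp\Span(W)$, $t^2+s^2=1$, one still has $\min_{v\in W}\ip{\theta}{v}\le(1+o(1))\,t\sqrt{n/|W|}$. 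Second --- this is the heart --- if for fixed $t$ the orthogonal part $\theta_\perp$ is chosen adaptively to make the black support function as small as possible, then $\min_{\theta_\perp}\max_{b\in B}\ip{\theta}{b}=(1+o(1))\sqrt{2\log(K/n)}$, uniformly in $t$. This is a weighted version of the statement that the Gaussian polytope of $K\gg n$ points in $\R^n$ contains, and is essentially inscribed in, the Euclidean ball of radius $(1+o(1))\sqrt{2\log(K/n)}$ about the origin: a union bound over an $\e$-net of $S^{n-1}$, whose metric entropy is of order $n$, against the sub-gaussian upper tails of the $|B|$ scalars $\ip{\theta}{b}$ produces precisely $\log(|B|/n)=(1+o(1))\log(K/n)$ in the exponent --- the factor $n$ in the entropy is exactly what turns the naive $\log K$ into $\log(K/n)$.

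Given these, Case~\ref{item: phase transition small} is proved by constructing a separating hyperplane. When $2Kq\log(K/n)(1+\e)<n$ we have $\sqrt{n/|W|}>(1+\tfrac{\e}{2})^{1/2}\sqrt{2\log(K/n)}$, so we take $\theta$ whose $\Span(W)$-part points along the balanced direction with magnitude $t$ slightly above $t_0:=\sqrt{2\log(K/n)\,|W|/n}<1$ and whose orthogonal part realizes the second fact above; then $\min_{v\in W}\ip{\theta}{v}=(1+o(1))\,t\sqrt{n/|W|}$ exceeds $(1+o(1))\sqrt{2\log(K/n)}=\max_{b\in B}\ip{\theta}{b}$. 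Hence $p\to 1$, and in fact $1-p=e^{-\Omega(Kq)}$, so $p^m\to 1$ because $Kq\gg\log m$, and $\E[p^m]\to 1$. Case~\ref{item: phase transition large} is softer. When $2Kq\log(K/n)(1-\e)>n$ then, after conditioning on a typical configuration of the $x_k$, with probability $1-o(1)$ one has $|W|\ge L^\ast:=\big\lceil n/\big((2-\e)\log(K/n)\big)\big\rceil$ (Chernoff, since $L^\ast$ lies below the mean $Kq$ and $Kq\to\infty$), and by exchangeability the white points may be taken to be i.i.d.\ Gaussians, so their barycenter has norm $(1+o(1))\sqrt{n/|W|}\le(1+o(1))\sqrt{(2-\e)\log(K/n)}$, which is strictly less than $(1-o(1))\sqrt{2\log(K/n)}$; that barycenter therefore lies in the origin-centered ball of radius $(1-o(1))\sqrt{2\log(K/n)}$, hence in $\conv(B)$, as well as in $\conv(W)$. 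So $p\to 0$ and $\E[p^m]\to 0$.

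The main obstacle is the second geometric fact: squeezing the adaptively minimized black support function down to $\sqrt{2\log(K/n)}$ rather than the $\sqrt{2\log K}$ that any data-independent direction gives. One must analyze $\min_{\theta_\perp\in S^{n-|W|-1}}\max_{b\in B}\big(t\ip{\theta_W}{b}+s\ip{\theta_\perp}{b}\big)$, a min--max in which $\theta_\perp$ has to dodge the $\approx n$ most dangerous black points while keeping all $|B|\approx K$ of them under control; the cancellation that replaces $2\log K$ by $2\log(K/n)$ comes from matching the order-$n$ metric entropy of the admissible directions against the Gaussian large-deviation rate of the $\ip{\theta}{b}$, and this is the only place where the two cases must be pushed to complementary $1\pm\e$ precision rather than merely up to constants. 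The remaining pieces --- Binomial concentration of $|W|$, Gaussian concentration of $\|v\|$ and of support functions, and the elementary estimate that $(1-\eta)^m\to 1$ or $0$ according as $m\eta\to 0$ or $\infty$ --- are routine.
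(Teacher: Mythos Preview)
Your reduction to $m=1$ via $\Pr{\text{Yes}}=\E[p(x_1,\dots,x_K)^m]$ is clean, and your argument for Part~\ref{item: phase transition large} is exactly the paper's: the barycenter of $W$ has norm $(1+o(1))\sqrt{n/|W|}$, which lands inside the round core of $\conv(B)$ of radius $(1-o(1))\sqrt{2\log(K/n)}$ (Theorem~\ref{thm: core}), so $\conv(W)\cap\conv(B)\neq\emptyset$.

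The gap is in Part~\ref{item: phase transition small}. Your construction requires an \emph{upper} bound on $\min_{\theta_\perp}\max_{b\in B}\ip{\theta}{b}$: you have to \emph{exhibit} a direction $\theta=t\theta_W+s\theta_\perp$ with $h_{\conv(B)}(\theta)\le(1+o(1))\sqrt{2\log(K/n)}$. But the $\e$-net plus union-bound argument you sketch proves the opposite inequality --- it shows that for \emph{every} unit $\theta$ one has $h_{\conv(B)}(\theta)\ge(1-o(1))\sqrt{2\log(K/n)}$, which is precisely the round core Theorem~\ref{thm: core}. A union bound over a net controls the probability that \emph{some} direction is bad, hence lower-bounds the minimum of $h$; it can never certify that \emph{some} direction is good. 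Relatedly, the assertion that the Gaussian polytope is ``essentially inscribed in'' the ball of radius $\sqrt{2\log(K/n)}$ is false: its vertices have norm $\approx\sqrt{n}$, and for any \emph{fixed} unit direction $h_{\conv(B)}(\theta)\approx\sqrt{2\log K}$, not $\sqrt{2\log(K/n)}$. In particular your ``uniformly in $t$'' already fails at $t=1$, and for the $t$ you actually use one has $s_0^2\approx\e/(1+\e)$, so the sub-sphere of admissible $\theta$ is genuinely small and there is no reason its minimum support value should coincide with the full-sphere minimum.

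What Part~\ref{item: phase transition small} really needs is that $\conv(W)$ is a face of $\conv(W\cup B)$ whenever $2|W|\log(K/n)(1+\e)<n$, and this is the Donoho--Tanner threshold (Theorem~\ref{thm: typical faces}), which the paper invokes as a black box together with the sharpened probability estimate $1-\exp(-c\e^2 s)$ from \eqref{eq: faces probability}; it is this estimate that yields the rate $1-p=e^{-\Omega(Kq)}$ and makes the union bound over the $m$ coordinates go through. Proving Donoho--Tanner with the sharp constant $2$ is not an $\e$-net argument --- it goes through exact face-counting and asymptotic angle computations --- so if you want a self-contained Part~\ref{item: phase transition small} you would need a genuinely different idea for producing the separating hyperplane, not just the balanced direction plus a net.
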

Here, and everywhere else, the notation $a(n)\gg b(n)$ (or 
$b(n) \ll a(n)$) means that $b(n)/a(n) \to 0 $ as $n \to \infty$.
It is important to note how little this result depends on $m$. If we consider a single linear threshold neuron ($m=1$) corresponding to an $A(n,1)$ network, we have:

\begin{corollary}[Phase transition]			\label{thm: phase transition1}
  Assume that $x$ is a standard normal random vector in $\R^n$
  and $y$ is an independent vector in $\{0,1\}$ whose coordinates 
  are i.i.d. Bernoulli with parameter $q \in (0,1)$. 
  Fix $\e \in (0,1)$ and let $n \to \infty$, allowing $K$ and $q$ depend on $n$.
  Assume that $K \gg n$. 
  \begin{enumerate}[1.]
    \item \label{item: phase transition small1}
      If $2Kq\log(K/n) (1+\e) < n$ 
      then the sample of $K$ points is linearly separable with probability $1-o(1)$.
    \item \label{item: phase transition large1}
      If $2Kq\log(K/n) (1-\e) > n$ 
      then the sample of $K$ points is not linearly separable with probability $1-o(1)$.
  \end{enumerate}
\end{corollary}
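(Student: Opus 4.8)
The plan is to obtain Corollary~\ref{thm: phase transition1} directly from Theorem~\ref{thm: phase transition} by specializing to $m=1$. All the probabilistic content already resides in the theorem, so the only work is to check that the hypotheses transfer and that the two phrasings of the problem coincide.

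First I would unpack the case $m=1$ of Question~\ref{q: Tnm}. A threshold map $F\in\mathcal{T}(n,1)$ is a single linear threshold function $F(x)=h(\ip{w}{x}-b)$ for some $w\in\R^n$ and $b\in\R$, so the existence of such an $F$ with $F(x_k)=y_k$ for all $k$ says precisely that $\ip{w}{x_k}>b$ whenever $y_k=1$ and $\ip{w}{x_k}<b$ whenever $y_k=0$; that is, the point sets $\{x_k : y_k=1\}$ and $\{x_k : y_k=0\}$ are separated by the affine hyperplane $\{z : \ip{w}{z}=b\}$. This is exactly linear separability of the labeled sample. The value of $h$ at $0$ is irrelevant here: since $x$ is standard normal, hence absolutely continuous, with probability one no $x_k$ lies on the separating hyperplane, and for finitely many points in general position strict and non-strict separability agree. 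Thus, up to a probability-one event, ``the answer to Question~\ref{q: Tnm} is Yes'' and ``the sample of $K$ points is linearly separable'' describe the same event.

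Next I would verify that the hypotheses of Theorem~\ref{thm: phase transition} hold in the setting of the corollary with the constant choice $m\equiv 1$: the vector $x$ is standard normal in $\R^n$; the vector $y$ is independent and lies in $\{0,1\}^1=\{0,1\}$ with i.i.d.\ (here, a single) Bernoulli$(q)$ coordinate; and $K\gg n$ is assumed. The only remaining condition of the theorem is $Kq\gg\log m$, and for $m=1$ this reads $Kq\gg 0$, which holds trivially since $q\in(0,1)$ and $K\ge 1$ force $Kq>0$, so $0/(Kq)=0\to 0$. Hence all assumptions of Theorem~\ref{thm: phase transition} are satisfied, and for the same $\e$ and the same growth regime, parts~\ref{item: phase transition small} and~\ref{item: phase transition large} of the theorem give, with probability $1-o(1)$, a Yes answer when $2Kq\log(K/n)(1+\e)<n$ and a No answer when $2Kq\log(K/n)(1-\e)>n$. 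Combining with the equivalence of the previous paragraph yields parts~\ref{item: phase transition small1} and~\ref{item: phase transition large1}.

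There is essentially no obstacle beyond Theorem~\ref{thm: phase transition} itself; the single point that warrants care is that the corollary's hypotheses are \emph{weaker} than the theorem's, since it drops the condition $Kq\gg\log m$, and one must note that this omission is legitimate precisely because $m=1$ renders that condition vacuous. Everything else is bookkeeping.
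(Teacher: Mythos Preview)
Your proposal is correct and matches the paper's approach: the corollary is presented immediately after Theorem~\ref{thm: phase transition} as the specialization to $m=1$, with no separate proof given. Your observation that the condition $Kq\gg\log m$ becomes vacuous at $m=1$ (since $\log 1=0$) is exactly the point that justifies dropping that hypothesis.
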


To better understand this result, let us first notice that for $\epsilon$ very small we have:
\begin{enumerate}
\item If $K=Cn$ for some constant $C>0$, then the phase transition occurs for: 
$q=1/(2C\log C)$. For instance, if $C=10$
then the phase transition occurs for:
$q=1/(20\log 10)$.
\item If $K=n^{1+\alpha}$ for $\alpha >0$, 
then the phase transition occurs for:
$q=1/(2\alpha n^\alpha \log n)$. For instance, 
if $K=n^2$, then $\alpha =1$ and the transition occurs for: 
$q=1/(2 n \log n)$.
\end{enumerate}

Theorem~\ref{thm: phase transition} can be deduced from two results on the 
geometry of {\em Gaussian polytopes}. Consider $N$ independent random 
vectors $x_1,\ldots,x_N$ taking values in $\R^n$. Their convex hull is a random polytope in $\R^n$. 
If $x_k$ are drawn from the standard Gaussian distribution, 
the random polytope: 
$$
P \coloneqq \conv(x_1,\ldots,x_N)
$$ 
is called a Gaussian polytope.

Random polytopes including random regular polytopes \cite{affentranger1992random,
vershik1992asymptotic,boroczky1999random,
donoho2010counting}, random Gaussian 
polytopes
\cite{hug2005gaussian,barany2007central,donoho2009counting,
grote2018gaussian}, 
and more general random polytopes \cite{mendelson2005geometry,litvak2005smallest, latala2007banach, hug2013random,kabluchko2019cones, kabluchko2019expected}, have been extensively studied
in the area of asymptotic convex geometry.
One remarkable property is that random polytopes in high dimensions 
are {\em neighborly}: points $x_k$ are likely to form vertices of $P$ 
(instead of falling into the interior of $P$), 
the intervals that join pairs of points $x_k$ are likely to form edges of $P$,
the triangles that are formed by triples of points $x_k$ are likely to 
form two-dimensional faces of $P$, 
and this continues up to faces of a certain dimension $s$. D.~Donoho and J.~Tanner were 
the first to determine asymptotically sharp threshold for the critical dimension $s$ \cite{donoho2009counting}:

\begin{theorem}[Typical faces of a Gaussian polytope]		\label{thm: typical faces}
  Let $x_1,\ldots,x_N$ be independent standard Gaussian random vectors in $\R^n$. 
  Fix $\e \in (0,1)$ and let $n \to \infty$, allowing $N$ and $s$ depend on $n$. 
  \begin{enumerate}[1.]
    \item \label{item: faces 1} If $2s\log(N/n) (1+\e) < n$ then 
      $\conv(x_1,\ldots,x_s)$ is a face of the polytope $\conv(x_1,\ldots,x_N)$
      with probability $1-o(1)$ as $n \to \infty$.
    \item If $2s\log(N/n) (1-\e) > n$ then 
      $\conv(x_1,\ldots,x_s)$ is not a face of the polytope $\conv(x_1,\ldots,x_N)$
      with probability $1-o(1)$ as $n \to \infty$.
    \end{enumerate}
\end{theorem}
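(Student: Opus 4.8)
The plan is to estimate directly the probability $p_s := \Pr{\conv(x_1,\ldots,x_s)\text{ is a face of }P}$, where $P := \conv(x_1,\ldots,x_N)$; by exchangeability of the $x_k$ this is the same for every $s$-subset, so it is exactly the ``$s$-neighborliness'' probability. The first move is a standard geometric reduction. Almost surely the $x_k$ are in general position, and then $\conv(x_1,\ldots,x_s)$ is a face of $P$ precisely when there is a linear functional that is constant on $x_1,\ldots,x_s$ and strictly smaller on $x_{s+1},\ldots,x_N$. Any such functional vanishes on $E := \Span(x_1-x_s,\ldots,x_{s-1}-x_s)$ --- an $(s-1)$-dimensional subspace a.s.\ --- and hence factors through the orthogonal projection $\pi$ onto $E^\perp\cong\R^{d}$, $d := n-s+1$. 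Since $\pi$ collapses $x_1,\ldots,x_s$ to a single point $v := \pi(x_1) = \cdots = \pi(x_s)$, the face condition becomes simply
$$v \notin \conv\big(\pi(x_{s+1}),\ldots,\pi(x_N)\big).$$

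Next I would pin down the conditional law. Writing $v = \pi(\bar x)$ for $\bar x := \tfrac1s(x_1+\cdots+x_s)$ and using that the Gaussian sample mean $\bar x$ is independent of the centered differences $x_i-x_s$ (hence of $E$ and $\pi$), one gets: conditionally on $E$, the vectors $y_j := \pi(x_j)$ for $j>s$ are i.i.d.\ $N(0,I_d)$, the point $v$ is $N(0,\tfrac1s I_d)$, and $v$ is independent of the $y_j$. So $p_s = \Pr{v\notin\conv(y_1,\ldots,y_M)}$ with $M := N-s$, and the theorem reduces to a sharp two-sided estimate of the probability that a Gaussian point escapes the convex hull of $M$ i.i.d.\ Gaussians in $\R^d$, the point being rescaled by $1/\sqrt s$.

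The heart of the matter is this escape probability $q := \Pr{v\notin\conv(y_1,\ldots,y_M)}$. Condition on $v$; by rotational invariance take $v = r e_1$ with $r = \|v\|$, which concentrates around $\sqrt{d/s}$. The event $\{re_1\notin\conv(y_j)\}$ holds iff some $d$ of the $y_j$ span a facet whose affine hull separates $re_1$ from the remaining points, so the first moment bound gives
$$q \;\le\; \binom{M}{d}\,\Pr{\mathrm{aff}(y_1,\ldots,y_d)\text{ is such a separating facet}}.$$
Parametrizing that hyperplane as $\{x : \langle\theta,x\rangle = c\}$, $\|\theta\| = 1$, the normal $\theta$ is uniform on $S^{d-1}$ and the offset $c$ is (up to a harmless sign) an independent $N(0,1/d)$; given $(\theta,c)$ the ``no $y_j$ past the hyperplane'' event has probability $\Phi(c)^{M-d}$, while the separation requirement is $c < r\langle\theta,e_1\rangle$. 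Inserting Stirling for $\binom Md$ and the tail asymptotics $1-\Phi(c)\sim\phi(c)/c$, and evaluating the resulting $(\theta,c)$-integral by Laplace's method, the exponent is dominated by a configuration with the hyperplane at offset $c\approx\sqrt{2\log(M/d)}$ and $\theta$ nearly aligned with $e_1$; the exponential rate changes sign exactly at $2s\log(N/n) = n$, which gives $q\to 0$ in regime~2. For regime~1 I need the reverse statement $q\to 1$, i.e.\ $\Pr{v\in\conv(y_j)}\to 0$; the first moment no longer suffices, and the natural route is a second-moment (Paley--Zygmund) argument: the number of separating facets has mean tending to infinity, and one shows its variance is of lower order by estimating the probability that two distinct $d$-subsets give separating facets simultaneously and checking that this asymptotically factorizes.

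Finally, I would undo the conditioning: the frozen quantities ($\|v\|$, and the various inner products entering the second-moment step) concentrate around their means with exponentially small deviations, so replacing them by typical values perturbs the exponent only by $(1\pm o(1))$, which is absorbed into the $(1\pm\e)$ of the hypotheses. The main obstacle is precisely the sharp two-sided analysis of $q$ --- especially the lower bound $q\to1$ in regime~1, which forces the variance computation for the facet count --- together with the care needed to keep all estimates uniform over the full range of $s$, from $s$ bounded up to $s$ comparable to $n$, where $d = n-s+1$ is not close to $n$ and the ``$\log(M/d)$'' correction responsible for the $\log(N/n)$ in the threshold must be tracked precisely.
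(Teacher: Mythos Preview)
The paper does not prove Theorem~\ref{thm: typical faces}. It is quoted from Donoho and Tanner \cite{donoho2009counting} as a black box, together with the quantitative improvement \eqref{eq: faces probability} from \cite{amelunxen2014living}, and used as an input to the proof of Theorem~\ref{thm: phase transition}. So there is no ``paper's own proof'' to compare your attempt against.

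Your proposal, on its own merits, follows the correct geometric reduction. Projecting onto $E^\perp$ with $E = \Span(x_1-x_s,\ldots,x_{s-1}-x_s)$ does collapse $x_1,\ldots,x_s$ to a single point $v$, and the independence you claim between $\bar{x}$ and the centered differences (hence between $v$ and the projected points $y_j$, given $E$) is valid for Gaussians. The resulting question---whether a point distributed as $N(0,\tfrac{1}{s}I_d)$ escapes the convex hull of $M$ i.i.d.\ $N(0,I_d)$ points in $\R^d$---is exactly the formulation Donoho and Tanner work with, and they carry out an analysis equivalent to your facet-counting via careful asymptotics of internal and external angles of the regular simplex. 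Your first/second moment scheme is the right shape; the hard work you flag (sharp Laplace asymptotics, the variance computation for Part~1, uniformity over the full range of $s$) is genuinely where the difficulty lies, and it is substantial.

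One remark that may save you effort on Part~2: the paper does prove, in Appendix~\ref{a: core}, the round-core result Theorem~\ref{thm: core}, and this gives a much shorter route to ``not a face'' than your first-moment bound on the number of separating facets. In your reduced picture, $\conv(y_1,\ldots,y_M)$ contains a ball of radius $\approx\sqrt{2\log(M/d)}$ with high probability, while $\norm{v}_2$ concentrates at $\sqrt{d/s}$; the condition $2s\log(N/n)(1-\e) > n$ forces the latter below the former, so $v$ falls inside the hull deterministically on these events. This bypasses the Laplace-method computation entirely for that direction.
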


Motivated by the basic problem of compressed sensing, this theorem sparked many later developments, some of which are summarized in
e.g. \cite{donoho2010precise,amelunxen2014living, bayati2015universality, hug2020threshold}.
In particular, the probability in both parts of Theorem~\ref{thm: typical faces} can be improved to: 
\begin{equation}	\label{eq: faces probability}
1 - \exp(-c \e^2 s),
\end{equation}
see \cite[Theorem~II]{amelunxen2014living}. 

\begin{proof}[Proof of Part~\ref{item: phase transition small} of Theorem~\ref{thm: phase transition}.]
Let us first assume that $m=1$.
Call the points $x_k$ with labels $y_k=0$ ``black points'' and the others ``white points''. 
Let $s$ denote the number of white points. 
The assumption $Kq \gg 1$ implies that $s = Kq (1+ o(1))$ with probability $1-o(1)$. 
Let us condition on the labels $(y_k)$ with the number of white points $s$ satisfying the condition above.
Our assumption implies that:
$$
2s\log(K/n) (1+\e/2) 
\le 2Kq\log(K/n) (1+\e)
< n
$$ 
if $n$ is large.
Then, applying part \ref{item: faces 1} of Theorem~\ref{thm: typical faces}
with $\e/2$ instead of $\e$, we see that the convex hull of white points
is a face of the polytope $\conv(x_1,\ldots,x_N)$ with probability $1-o(1)$ as $n \to \infty$.
This means that the sets of black and white points are linearly separable, 
i.e. they can be separated by an affine hyperplane. Equivalently, there exists 
a threshold function $F \in \mathcal{T}(n,1)$ that realizes the data.

The general case where $m \ge 1$ follows by taking a union bound over the $m$
events, one for each coordinate of $y$. Due to \eqref{eq: faces probability}, 
the probability of failure is bounded by $m \cdot \exp(-c \e Kq) \ll 1$ if $Kq \gg \log m$.
\end{proof}

The second part of Theorem~\ref{thm: phase transition}, unfortunately, does {\em not} 
follow from Theorem~\ref{thm: typical faces} by a similar argument. 
While it is true that a set of points $x_1,\ldots,x_s$ 
that spans a face of the polytope $P = \conv(x_1,\ldots,x_N)$ must be linearly separated 
from the other points $x_{s+1},\ldots,x_N$, the converse may may not be true. 
As Figure~\ref{fig: polytope-separation} shows, points $x_1,\ldots,x_s$ might still be linearly separated from $x_{s+1},\ldots,x_N$
even if they do not form a face of $P$. 
\begin{figure}[htp]			
  \centering 
    \includegraphics[width=0.5\textwidth]{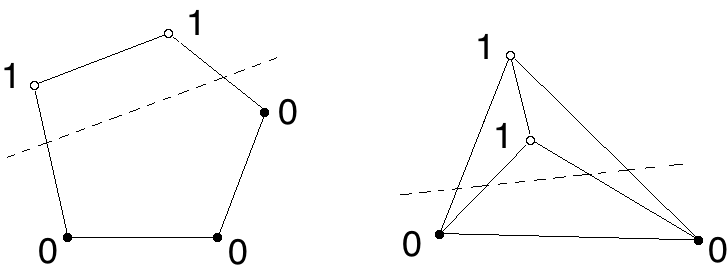} 
    \caption{Proof of Part~\ref{item: phase transition small} of Theorem~\ref{thm: phase transition}.: 
    The white points $x_k$ (labeled $y_k=1$) form a face of the Gaussian polytope 
    $\conv(x_1,\ldots,x_N)$ and thus are linearly separated from the black points. 
    However, this reasoning can not be reversed: black points may be linearly separated 
    from the white without forming a face of the Gaussian polytope.}
  \label{fig: polytope-separation}
\end{figure}

A different property of Gaussian polytopes can be used to deduce the 
second part of Theorem~\ref{thm: phase transition}: the existence of a {\em round core} of $P$.
The following result shows that $P$ contains the centered Euclidean ball of radius 
$r \approx \sqrt{2 \log(N/n)}$.

\begin{theorem}[Round core of a Gaussian polytope]		\label{thm: core}
  For every $\e \in(0,1)$ there exists $C(\e)>0$ such that the following holds.
  Assume that $N \ge C(\e) n$ and 
  let $x_1,\ldots,x_N$ be independent standard Gaussian random vectors in $\R^n$.
  Then:
  $$
  \conv(x_1,\ldots,x_N) \supset \sqrt{2 \log \Big( \frac{N}{n} \Big) (1-\e)} \cdot B(n)
  $$
  with probability at least $1-e^{-n}$.
  Here $B(n)$ denotes the unit Euclidean ball in $\R^n$ centered at the origin.
\end{theorem}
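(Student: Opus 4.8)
The plan is to pass to the support function. Write $P = \conv(x_1,\ldots,x_N)$ and $r = \sqrt{2\log(N/n)(1-\e)}$. Since $h_P(\theta) := \max_{z\in P}\ip{z}{\theta} = \max_{k\le N}\ip{x_k}{\theta}$, one has $P \supset r\,B(n)$ if and only if $\max_{k\le N}\ip{x_k}{\theta}\ge r$ for every $\theta\in S^{n-1}$, so it suffices to bound the probability of the bad event
$$
\EE \;:=\; \Big\{\, \exists\, \theta \in S^{n-1}:\ \ip{x_k}{\theta} < r \ \text{ for all } k\le N \,\Big\}.
$$
First I would note that a union bound over an $\e$-net of $S^{n-1}$ is too lossy here: the map $\theta\mapsto\max_k\ip{x_k}{\theta}$ is Lipschitz only with constant $\max_k\|x_k\|\approx\sqrt n$, which forces a net of roughly $n^{n/2}$ points, and this cannot be matched by the pointwise failure probability at a net point, which is only of order $e^{-c\,n}$ (with $c$ depending on $N/n$) in the regime $N\asymp n$ that sits exactly at the threshold of the hypothesis.

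Instead I would work with the dual polyhedron $\Theta := \{\theta\in\R^n :\ \ip{x_k}{\theta}\le r \ \text{ for all } k\le N\}$, a closed convex set containing $0$ in its interior. On the event $\EE$ the set $\Theta$ contains a unit vector, so either $\Theta$ is unbounded, or $\Theta$ is a polytope and the convex function $\theta\mapsto\|\theta\|^2$ attains its maximum over $\Theta$ at a vertex $v$, which therefore has $\|v\|\ge 1$. Almost surely every vertex of $\Theta$ equals the unique solution $v_S$ of the system $\ip{x_k}{v}=r$ for $k\in S$, for some index set $S$ of size $n$ with $\{x_k\}_{k\in S}$ linearly independent; moreover $v_S$ depends only on $(x_k)_{k\in S}$, and ``$v_S$ is a vertex of $\Theta$'' is precisely the event $\{\ip{x_k}{v_S}\le r \text{ for all } k\notin S\}$. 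Hence
$$
\Pr{\EE}\ \le\ \Pr{0\notin\conv(x_1,\ldots,x_N)}\ +\ \sum_{|S|=n}\Pr{v_S\text{ is a vertex of }\Theta\ \text{and}\ \|v_S\|\ge 1}.
$$

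To bound one term in the sum, condition on $(x_k)_{k\in S}$, which fixes $v_S$; on the event $\|v_S\| = \ell \ge 1$ the remaining $N-n$ quantities $\ip{x_k}{v_S}$, $k\notin S$, are i.i.d. $N(0,\ell^2)$, so the conditional probability that they are all $\le r$ equals $\Phi(r/\ell)^{N-n}\le\Phi(r)^{N-n}\le e^{-(N-n)\bar\Phi(r)}$, where $\Phi$ is the standard normal distribution function and $\bar\Phi = 1-\Phi$ (we used $\ell\ge 1$ and $r\ge 0$). Once $C(\e)$ is large, $r\ge 1$, and the Gaussian tail estimate gives $\bar\Phi(r)\ge c_0\,r^{-1}e^{-r^2/2} = c_0\,r^{-1}(n/N)^{1-\e}$ for a universal constant $c_0>0$. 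Writing $t := N/n$ and using $N-n\ge N/2$ and $\binom Nn\le(eN/n)^n$, the sum is at most
$$
\exp\!\Big(\, n\log(et)\ -\ \tfrac{c_0}{2}\,r^{-1}\,t^{\e}\,n \,\Big),
$$
which is $\le\tfrac12 e^{-n}$ as soon as $t^{\e}$ exceeds $r\log(et)$ by a large enough constant factor; since $r = \sqrt{2(1-\e)\log t}$, the quantity $r\log(et)$ is at most a constant times $(\log t)^{3/2}$, so this holds for all $t\ge C(\e)$. Finally, $\Pr{0\notin\conv(x_1,\ldots,x_N)}\le\tfrac12 e^{-n}$ whenever $N\ge C(\e)n$, by Wendel's theorem (or by a parallel extreme-ray count for the polar cone $\{z:\ip{x_k}{z}\le 0 \text{ for all } k\}$). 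Summing the two bounds gives $\Pr{\EE}\le e^{-n}$, which is the claim.

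The only real obstacle I anticipate is the conceptual step above: recognizing that the uniform-in-$\theta$ statement should be reduced to the finite family of candidate witnesses $v_S$ produced by the dual polyhedron, rather than approached via an $\e$-net. Once that reduction is made, what remains is the elementary competition between the $\binom Nn$ union bound and the tail $\bar\Phi(r)^{N-n}$, and it is precisely the demand that the tail prevail that forces the hypothesis $N\ge C(\e)n$.
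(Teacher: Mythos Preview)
Your proof is correct and follows essentially the same route as the paper's: both reduce to a union bound over the $\binom{N}{n}$ size-$n$ index sets, condition on the $n$ selected points to fix a direction, observe that the remaining $N-n$ Gaussian projections onto that direction are i.i.d., and arrive at the identical estimate $\binom{N}{n}\,\Phi(r)^{N-n}$ together with the same balancing of $\log(N/n)$ against $(N/n)^{\e}$. The only cosmetic difference is that the paper enumerates candidate \emph{facets} of the primal polytope $P$ (with a two-case split according to the sign of separation), whereas you enumerate candidate \emph{vertices} $v_S$ of the dual polyhedron $\Theta$ (splitting off the unbounded-$\Theta$ case via Wendel); these two enumerations are in bijection, and the paper's second case corresponds exactly to your unbounded case.
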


A weaker version of this result, with an absolute constant factor instead of the constant $2$, goes back to Gluskin \cite{gluskin1989extremal}, where the result is stated in the dual form. Gluskin's result inspired many further developments in the area of asymptotic convex geometry. Its ramifications can be found in particular in \cite{giannopoulos2002random,litvak2005smallest,dafnis2009asymptotic} and \cite[Section~7.5]{artstein2015asymptotic}.
None of the published versions of Gluskin's theorem, to our knowledge, exhibit the exact absolute constant $2$ that is critical for our purposes. We give a proof of Theorem~\ref{thm: core} 
in Appendix~\ref{a: core}, which essentially combines the argument in \cite{giannopoulos2002random} with an asymptotically sharp tail bound of the normal distribution.

Now we can deduce Part~\ref{item: phase transition large} of Theorem~\ref{thm: phase transition}, 
setting $m=1$ for simplicity. There are $s \approx Kq$ white points (those with labels $y_k=1$), 
and they are independent Gaussian random vectors, so their arithmetic mean $x_0$ has Euclidean norm $r_0 \approx \sqrt{n/Kq}$. 
By the assumption, this quantity is smaller than $r \approx \sqrt{2 \log(N/n)}$, 
which is the radius of the round core of the convex hull of the $N-s$ black points. So $x_0$ falls inside this round core and, as such, it is not linearly separable from the black points, see Figure~\ref{fig: core}. Hence the black and white points are not linearly separable. Here is a more formal proof.
\begin{figure}[htp]			
  \centering 
    \includegraphics[width=0.3\textwidth]{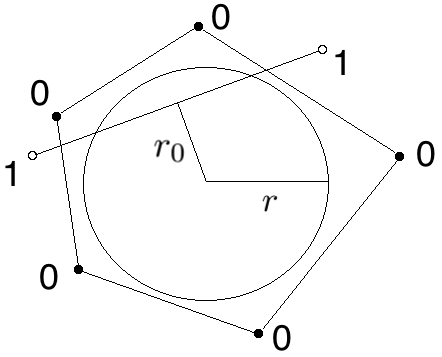} 
    \caption{Proof of Part~\ref{item: phase transition large} of Theorem~\ref{thm: phase transition}.
      The arithmetic mean of the white points (labeled $y_k=1$) has norm $r_0 \approx \sqrt{n/Kq}$. 
      This is smaller than the radius of the round core $r \approx \sqrt{2 \log(N/n)}$ of the 
      Gaussian polytope formed by the black points. Hence the black and white points are not 
      linearly separated.}
  \label{fig: core}
\end{figure}

\begin{proof}[Proof of Part~\ref{item: phase transition large} of Theorem~\ref{thm: phase transition}.]
Without loss of generality, we can assume that $m=1$. 
Condition on all labels $y_k$ 
so that the number of white points $s$ (those with labels $y_k=1$) satisfies
$s = Kq (1+ o(1))$, just like we did in the proof of the first part of the theorem. 
Without loss of generality, $q \le 1/2$. 
The number of black points $N \coloneqq K-s$ then satisfies
$N \ge K/3$ for large $n$. Thus we have for large $n$:
\begin{align} 
2s\log(N/n) 
&\ge 2Kq \log(K/n) (1-\e/2)
	\quad \text{(using also that $K \gg n$)} \nonumber\\
&\ge n (1+\e/2)
	\quad \text{(using our key assumption).} \label{eq: 2slogNn}
\end{align}

Let us apply Theorem~\ref{thm: core} for the black points and with $\e/4$ instead of $\e$.
It says that:
$$
\conv(\text{black points}) \supset r B(n)
$$
where: 
\begin{equation}	\label{eq: r large}
r = \sqrt{2 \log \Big( \frac{N}{n} \Big) \Big(1 - \frac{\e}{4} \Big)}
\ge \sqrt{\frac{n}{s} \Big( 1 + \frac{\e}{2} \Big) \Big( 1 - \frac{\e}{4} \Big)}
\ge \sqrt{\frac{n}{s} \Big( 1 + \frac{\e}{8} \Big)} 
\end{equation}
due to \eqref{eq: 2slogNn}.

On the other hand, the arithmetic mean of the white points:
$$
x_0 \coloneqq \frac{1}{s} \sum_{k:\; y_k=1} x_k 
$$
is a rescaled normal random vector, namely it $x_0 = g/\sqrt{s}$
where $g$ is a standard normal random vector in $\R^n$.
Due to a standard concentration inequality for the norm, 
$\norm{g}_2 = (1+o(1)) \sqrt{n}$ with probability $1-o(1)$, which yields:
$$
\norm{x_0}_2 = (1+o(1)) \sqrt{\frac{n}{s}}.
$$

Comparing this to \eqref{eq: r large}, we see that for large $n$:
$$
\norm{x_0}_2 < r
$$
with probability $1-o(1)$.
This means that $x_0$ lies in the ball $rB(n)$, which in turn lies in the convex hull 
of the black points. 

Summarizing, we showed that with high probability, 
the arithmetic mean of the white points $x_0$ lies in the convex hull 
of the black points. Therefore, the sets of black and white points can not be separated
by any hyperplane. Equivalently, there does not exist
any threshold function $F \in \mathcal{T}(n,1)$ that realizes the data.
The proof is complete.
\end{proof}

\subsection{Realizing all label assignments simultaneously}

The model of data considered in Theorem~\ref{thm: phase transition}, in which we assumed
that the labels $y_k$ are independent of the data points $x_k$, is not very realistic.
Fortunately, this result can be strengthened and allow for any dependence of the labels 
$y_k$ on $x_k$. The only requirement is the {\em sparsity} of label assignment.  
We say that the label assignment is {\em $s$-sparse} if,
for each coordinate $i \in \{1,\ldots,m\}$, 
at most $s$ of the labels $y_1(i),\ldots,y_K(i)$ are equal to $1$. 

\begin{theorem}[All label assignments simultaneously]			\label{thm: uniform}
  Assume that $x_1,\ldots,x_K$ are drawn independently from the 
  standard normal distribution in $\R^n$.
  Fix $\e \in (0,1)$ and let $n \to \infty$, allowing $m$, $K$ and $s$ depend on $n$.
  If: 
  $$
  2es\log \big( K/(n \cdot 2\sqrt{\pi}) \big) (1+\e) < n
  $$ 
  then the following holds with probability $1-o(1)$. 
  For any $s$-sparse label assignment $y_1,\ldots,y_K \in \{0,1\}^m$,
  there exists a function $F \in \mathcal{T}(n,m)$ such that:
  $$ 
  F(x_k) = y_k
  \quad \text{for all} \quad k=1,\ldots,K.
  $$
\end{theorem}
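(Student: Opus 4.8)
The plan is to reduce the theorem to a single, purely geometric statement about the Gaussian polytope $P \coloneqq \conv(x_1,\ldots,x_K)$ that does not mention $m$ or the labels at all: namely, that $P$ is \emph{$s$-neighborly}, meaning every subset of at most $s$ of the points $x_k$ spans a face of $P$. This reduction suffices. Suppose $P$ is $s$-neighborly and let $y_1,\ldots,y_K \in \{0,1\}^m$ be any $s$-sparse assignment. For each coordinate $i$ the ``white'' index set $S_i \coloneqq \{k : y_k(i)=1\}$ has $|S_i| \le s$, so $\conv(x_k : k \in S_i)$ is a face of $P$; pick a supporting hyperplane $\{z : \ip{w_i}{z} = b_i\}$ with the white points on it and all black points strictly on the negative side, and replace $b_i$ by $b_i - \rho$ for a small $\rho>0$: now the white points are strictly positive and the black points still strictly negative, so $x \mapsto h(\ip{w_i}{x} - b_i + \rho)$ outputs $y_k(i)$ on every $x_k$ (the case $S_i=\emptyset$ being handled by any hyperplane missing all the points). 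Assembling these $m$ threshold functions gives $F \in \mathcal{T}(n,m)$ with $F(x_k)=y_k$ for all $k$. Thus the whole dependence on $m$ and on the choice of assignment has been absorbed into the one event ``$P$ is $s$-neighborly,'' and it remains only to show this event has probability $1-o(1)$.

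To prove $s$-neighborliness we use a union bound. As in Theorem~\ref{thm: phase transition} we are in the regime $K \gg n$ (we may assume $K \ge C(\e)n$ for $C(\e)$ large, else the hypothesis is not a genuine constraint). Since a subset of a face is again a face, it is enough to control subsets of size exactly $s$, of which there are $\binom{K}{s} \le (eK/s)^s$, so the goal is
\begin{equation}\label{eq:plan-union}
  \binom{K}{s}\cdot\Pr{\conv(x_1,\ldots,x_s)\text{ is not a face of }P}\;\longrightarrow\;0 .
\end{equation}
The single-face estimate \eqref{eq: faces probability} is far too weak for this: it only gives a failure probability of order $e^{-cs}$, whereas $\binom{K}{s}$ grows like $e^{s\log(eK/s)}$ and $\log(eK/s)$ need not be bounded. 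What \eqref{eq:plan-union} demands is a per-subset failure probability decaying like $e^{-\Theta(s\log(K/n))}$ with the \emph{sharp} constant — exactly sharp enough that, after multiplication by $(eK/s)^s$, the hypothesis $2es\log(K/(2\sqrt\pi n))(1+\e) < n$ forces the product to $0$.

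To get such a bound I would condition on $x_1,\ldots,x_s$. Writing $L = \Span(x_2-x_1,\ldots,x_s-x_1)$, which is $(s-1)$-dimensional almost surely, and projecting onto $L^\perp \cong \R^{d}$ with $d = n-s+1$, one checks that $\{x_1,\ldots,x_s\}$ fails to span a face of $P$ precisely when $w \coloneqq P_{L^\perp} x_1$ lies in the convex hull of the $M \coloneqq K-s$ conditionally i.i.d.\ standard Gaussian vectors $P_{L^\perp}x_{s+1},\ldots,P_{L^\perp}x_K$ in $\R^{d}$. Hence the failure probability equals $\E\, q(d,M,\norm{w}_2)$, where $q(d,M,\rho) \coloneqq \Pr{\rho\,\theta \in \conv(g_1,\ldots,g_M)}$ for $g_j$ i.i.d.\ $N(0,I_d)$ and $\theta$ any unit vector; since $\norm{w}_2$ concentrates near $\sqrt d \approx \sqrt n$ with exponentially small lower tails, the crux becomes a sharp upper bound on $q(d,M,\rho)$ for $\rho$ of order $\sqrt n$ and $M$ comparable to $K$. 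This is a quantitative, ``complementary'' form of the round-core estimate Theorem~\ref{thm: core}: a point at distance $\approx\sqrt n$ from the origin is swallowed by the Gaussian polytope on $\approx K$ points only when $K$ is enormous, and ``enormous'' means $\log(K/(2\sqrt\pi n))$ exceeds roughly $n/(2es)$. Carrying this out with the exact constants — equivalently, pinning down the asymptotics of the external angle of a simplex at an $(s-1)$-face, or invoking the strong-neighborliness threshold of Donoho and Tanner — is the main obstacle; everything else is bookkeeping. In this analysis the factor $e$ in the hypothesis is precisely the Stirling factor from $\binom{K}{s}\le(eK/s)^s$, and the $2\sqrt\pi$ is the lower-order correction produced by the Gaussian density normalization $1/\sqrt{2\pi}$ together with a two-sided tail. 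Once the per-subset bound is established, \eqref{eq:plan-union} and hence the $s$-neighborliness of $P$ follow, completing the proof.
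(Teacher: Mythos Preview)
Your reduction is exactly the paper's: the event ``$P=\conv(x_1,\ldots,x_K)$ is $s$-neighborly'' implies the conclusion for every $m$ and every $s$-sparse assignment simultaneously, by separating each coordinate's white set as a face. The paper then does not attempt the union-bound/projection analysis you outline; it simply quotes the strong-neighborliness threshold of Donoho--Tanner (stated in the paper as Theorem~\ref{thm: all faces}) as a black box, which is precisely the option you flag at the end. So your proposal is correct and follows the same route; the extra material you sketch (conditioning on $x_1,\ldots,x_s$, projecting to $L^\perp$, bounding $q(d,M,\rho)$, and reading off the constants $e$ and $2\sqrt\pi$) is essentially a reconstruction of the Donoho--Tanner argument rather than an alternative to it.
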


Up to absolute constant factors, this result is stronger than the 
first part of Theorem~\ref{thm: phase transition}.
Indeed, if $Kq \gg \log m$, the label assignment is $s$-sparse with $s = Kq(1+o(1))$ 
with probability $1-o(1)$.

Theorem~\ref{thm: uniform} follows in a way similar to the previous theorems in this Section from a stronger form of 
Donoho-Tanner's Theorem~\ref{thm: typical faces}, which was also proved in \cite{donoho2009counting}.

\begin{theorem}[All faces of a Gaussian polytope]		\label{thm: all faces}
  Let $x_1,\ldots,x_N$ be independent standard Gaussian random vectors in $\R^n$. 
  Fix $\e \in (0,1)$ and let $n \to \infty$, allowing $N$ and $s$ depend on $n$. 
  \begin{enumerate}[1.]
    \item If $2es\log \big( N/(n \cdot 2\sqrt{\pi}) \big) (1+\e) < n$ then 
      the following holds with probability $1-o(1)$ as $n \to \infty$.
      For every subset $I \subset [N]$ of size $\abs{I} \le s$, the convex hull  
      $\conv(x_i:\; i \in I)$ is a face of the polytope $\conv(x_1,\ldots,x_N)$.
    \item If $2es\log \big( N/(n \cdot 2\sqrt{\pi}) \big) (1-\e) > n$ then 
      the following holds with probability $1-o(1)$ as $n \to \infty$.
      There exists a subset $I \subset [N]$ of size $\abs{I} \le s$ such that the convex hull  
      $\conv(x_i:\; i \in I)$ is {\em not} a face of the polytope $\conv(x_1,\ldots,x_N)$.
    \end{enumerate}
\end{theorem}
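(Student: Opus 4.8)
The plan is to derive both parts from one sharp estimate — call it Step~1 — on the probability
\[
p_{k,N} \coloneqq \Pr{\conv(x_i:\; i\in I)\ \text{ is a face of }\ \conv(x_1,\ldots,x_N)}
\]
for a \emph{fixed} index set $I$ of size $k$, which by exchangeability of the $x_i$ does not depend on $I$. First I would recall the classical reduction behind such probabilities: lifting $x_i\mapsto\tilde x_i=(x_i,1)\in\R^{n+1}$ turns affine separation in $\R^n$ into linear separation in $\R^{n+1}$, and $\conv(x_i:\; i\in I)$ is a face precisely when, after projecting the other $N-k$ lifted points onto the $(n{+}1{-}k)$-dimensional orthogonal complement $L$ of $\Span(\tilde x_i:\; i\in I)$, those $N-k$ vectors all lie in a common open halfspace of $L$ through the origin; conditioning on $I$, this is a Wendel-type event for i.i.d.\ (non-centred) Gaussians, whose closed form was worked out by Affentranger-Schneider and, in the quantitative form needed, by Donoho-Tanner (equivalently, $\binom{N}{k}p_{k,N}$ is the expected number of $(k{-}1)$-faces of a random projection of a simplex, given by McMullen's internal/external angle-sum formula, via the Baryshnikov-Vitale correspondence). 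The one thing I extract from Step~1 is the large-deviation rate of the \emph{complement}: in the regime $2ek\log\!\big(N/(2\sqrt{\pi}\,n)\big)(1+\e)<n$ the quantity $1-p_{k,N}$ is small enough that $\binom{N}{k}(1-p_{k,N})\to 0$, while in the regime $2ek\log\!\big(N/(2\sqrt{\pi}\,n)\big)(1-\e)>n$ one has $\binom{N}{k}(1-p_{k,N})\to\infty$. This is where the Euler constant $e$ enters (Stirling for $\binom{N}{k}$ balanced against a Gaussian tail) and where the factor $2\sqrt{\pi}$ enters (Gaussian normalisation in the external-angle integral, i.e.\ the anisotropy introduced by the affine lift); note that the soft bound \eqref{eq: faces probability}, $1-p_{k,N}\le\exp(-c\e^2k)$, is much too weak here since it does not beat $\binom{N}{k}^{-1}$ once $N/k\to\infty$.

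For the positive direction I would first reduce to subsets of size exactly $s$: Gaussian points are almost surely in general position, so every face is a simplex and every subset of the vertices of a face is again a face of the polytope; hence, if every $s$-element subset spans a face, so does every subset of size $\le s$. Let $F_s$ count the $s$-element subsets $I$ for which $\conv(x_i:\; i\in I)$ is \emph{not} a face; then $\E F_s=\binom{N}{s}(1-p_{s,N})$, and the hypothesis $2es\log(N/(2\sqrt\pi\,n))(1+\e)<n$ is exactly the condition under which, by Step~1, this tends to $0$. (Verifying this is a direct but somewhat delicate computation balancing $\log\binom{N}{s}\le s\log(eN/s)$ against the rate of $1-p_{s,N}$; it uses that in this regime $s\lesssim n/\log(N/n)$, so that $\log(N/s)$ and $\log(N/n)$ agree up to the $\e$-slack, and one should of course stay in the relevant regime $N\gg n$ so that $\log(N/(2\sqrt\pi n))\to\infty$ and the threshold is meaningful.) Markov's inequality then gives $\Pr{F_s\ge 1}=o(1)$, which is the first claim.

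For the negative direction I would run the second moment method on the same variable $F_s$. By Step~1, $\E F_s=\binom{N}{s}(1-p_{s,N})\to\infty$ in this regime, so it suffices to show $\Var F_s=o\big((\E F_s)^2\big)$ and apply Chebyshev. Expanding $\E F_s^2=\sum_{I,J}\Pr{I\ \text{and}\ J\ \text{are both non-faces}}$ and splitting according to $t\coloneqq\abs{I\cap J}$, the terms with bounded $t$ contribute $(1+o(1))(\E F_s)^2$ by near-independence, and the real work is to show the large-overlap terms are negligible — for which one again uses the conditional Wendel-type description from Step~1 to bound the joint non-face probability. Alternatively, one can simply quote Donoho-Tanner's analysis of the strong threshold, which handles this through the expected face counts and the concavity of the threshold curve.

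The main obstacle is Step~1: producing the single-subset face probability $p_{k,N}$ with the \emph{correct} constants — the Euler factor $2e$ in place of the $2$ of Theorem~\ref{thm: typical faces}, together with the sub-leading $2\sqrt\pi$ — and with enough quantitative margin to survive a union bound over the $\binom{N}{s}$ subsets. This is precisely what forces one past the soft estimate \eqref{eq: faces probability} and into the sharp Laplace/large-deviation asymptotics of the exact angle-sum (Wendel) formula, polynomial prefactors included; it is exactly the computation Donoho-Tanner carry out, and it is what makes Theorem~\ref{thm: all faces} strictly more delicate than Theorem~\ref{thm: typical faces}. In the negative direction, the secondary difficulty is controlling correlations between overlapping subsets in the second-moment estimate, which is why invoking Donoho-Tanner's result directly is the pragmatic route.
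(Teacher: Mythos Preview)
The paper does not prove Theorem~\ref{thm: all faces}; it is quoted verbatim as a result of Donoho and Tanner \cite{donoho2009counting} and used as a black box to derive Theorem~\ref{thm: uniform}. So there is no ``paper's own proof'' to compare your proposal against.

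That said, your outline is a fair high-level description of what Donoho--Tanner actually do, and you correctly identify where the real work lies: the sharp asymptotics for $1-p_{k,N}$ with the right constants (the $2e$ and the $2\sqrt\pi$), strong enough to survive the $\binom{N}{s}$ union bound. You are also right that the soft bound \eqref{eq: faces probability} is useless for this purpose. Your treatment of the positive direction (first moment on the non-face count $F_s$, reduction to size exactly $s$ via general position) is essentially complete modulo Step~1. For the negative direction, your second-moment plan is the natural one, but as you yourself note, controlling the overlap terms is nontrivial; Donoho--Tanner sidestep a direct second-moment computation by comparing the \emph{expected} number of $(s{-}1)$-faces of the Gaussian polytope with the total number $\binom{N}{s}$ of candidate faces and showing the ratio drops strictly below $1$ past the threshold, which immediately forces the existence of a missing face with probability $1-o(1)$. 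That route is cleaner than a raw second-moment argument and is what you end up invoking anyway.

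In short: your proposal is not a self-contained proof but an accurate roadmap of the Donoho--Tanner argument, with honest flags on the two genuinely hard steps. Since the paper itself simply cites the result, that is the appropriate level of detail.
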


Theorem~\ref{thm: phase transition} establishes the existence of a phase transition for the number $K$ of associations that can be stored in a linear threshold map, under the assumptions that $x$ is a standard normal vector and $y$ is independent from $x$. However, this leaves open two important questions. First, it would be useful to be able to prove a similar result for other realistic distributions for $x$ and $y$. It would be of particular interest to obtain results for the case where the components of $x$ are binary, or when they are not independent. And similarly for $y$, for instance when $y$ is not independent of $x$.
Second, 
Second, one would like to know if the memories  that
are plausible for a physical neural system \cite{baldi2021deep}.

These questions will be addressed using two key concepts: (1) sub-gaussian distributions;
and (2) local learning rules, in particular Hebbian learning rules. We begin by providing some background on learning rules.

\section{Learning algorithms}
Before we use sub-gaussian distributions to extend the previous theorems, it is useful to look at the algorithms by which
the memories could be learnt.  First, it should be clear that in general the $m$ units of an $A(n,m)$ architecture learn independently of each other, and thus it is enough to study learning in a single unit. 
Second, if the set of data pairs $(x,y)$ is linearly separable, the SVM learning approach of finding an hyperplane with maximum margin can be solved using linear or quadratic programming methods 
\cite{cortes1995machine,burges98,cristianini00}. However, such methods are not necessarily plausible for a physical neural system, as they do not necessarily result in a learning algorithm for the synaptic weights that is local \cite{baldi2016local}, i.e. that it depends only on variables available locally at the synapse. In practice, for the models considered here, it means that we are interested in learning rules of the form:

$$
\Delta w_i = F(x_i,y,o)
$$
Here $x_i$ is the $i$-th component of the input vector $x$, $y$ is the target value, and $o$ is the actual output value of the neuron. The rules in this section are written for a single training examples corresponding to on-line learning, but they can be averaged across multiple examples in batch learning. There are three main, different but highly related, local learning rules that can be considered:
 gradient descent, the perceptron rule, and the simple Hebb rule.

\subsection{Gradient Descent Learning Rule}  For gradient descent, we modify the Heaviside threshold function to a sigmoidal logistic function. It is well known (e.g.
\cite{baldi2021deep} that, using the relative entropy (or Kullback-Leibler divergence) between the target $y$ and the output $o$ produced by the logistic function, the gradient descent rule has the form:

$$
\Delta w_i = \eta (y-o) x_i
$$
where $\eta$ is the learning rate.
The error function is convex and therefore gradient descent, or stochastic gradient descent, with a suitable learning rate converge to 
a set of weights which minimize the error function.  

\subsection{Perceptron Learning Rule}
The perceptron learning rule \cite{rosenblatt1958perceptron}is usually written as:

$$
\Delta w_i = 
\begin{cases}
  x_i, & y=1 \;\;{\rm and}\;\; o=-1 \\
  -x_i, &y=-1\;\;{\rm and} \;\; o=+1\\
  0 & {otherwise}.
\end{cases}
\label{eq:perceptron1}
$$
using a linear threshold functions with outputs and targets in $\{-1,+1\}$. It is applied to all the weights including the bias.
The perceptron learning algorithm
initializes the weight vector to zero $w(0)=0$, and then at each step it selects an element of the training set that is mis-classified and applies the learning rule above.
Note that because the weights are initialized to zero, the learning rate simply rescales all the weights, including the biase,) and thus it can be chosen to be 1. Notice that the rule above can be rewritten as:

$$
\Delta w_i = \frac{1}{2} (y-o)x_i
$$
which shows its connection to gradient descent, and as:

$$
\Delta w_i = yx_i
$$
for the examples that are misclassified, which shows its connection to the simple Hebb rule described below.

The perceptron learning theorem states \cite{novikoff1962convergence}
that if the data is separable, then the perceptron algorithm will converge to a separating hyperplane in finite time. One may suspect that this may be the case because the rule amounts to applying stochastic gradient descent to a unit with a sigmoidal (logistic or tanh) transfer function, which is similar to a perceptron. In addition, the rule above clearly improves the performance on an example $x$ that is mis-classified. For instance if the target of $x$ is $y=+1$ and $x$ is mis-classified and selected at step $t$, then we must have $w(t)\cdot x <0$ and $w(t+1)=w(t)+x$. As a result, the performance of the perceptron on example $x$ is improved since $w(t+1) \cdot x=w(t) \cdot x + \vert \vert x \vert \vert^2$, and similarly for mis-classified examples that have a negative target. To prove convergence more precisely, it is enough to take a unit vector $w^*$ that separates the data and show that the cosine of the angle between $w(t) $ and $w^*$ increases faster than $C\sqrt t$.

\subsection{Simple Hebb Learning Rule}
The simple Hebb rule can be written as:

$$
\Delta w_i = y x_i
 $$
with a learning rate of one.
For the threshold maps $F$  considered here
(Definition~\ref{def: threshold map}), in vector form this translates to:

\begin{equation}	\label{eq: W Hebb}
W \coloneqq \sum_{k=1}^K y_k x_k^\tran
\end{equation}
 
The simple Hebb rule is the rule used, for instance, to store memories in Hopfield networks \cite{hopfield1982}
corresponding to networks of symmetrically connected linear threshold gates. As we have seen the perceptron algorithm is identical to the simple Hebb rule on the examples that are misclassified. Thus a key question to be examined is what happens when the simple Hebb rule is applied once to {\it all}
the training examples.

Thus in the next section we extend the previous sparsity results into two directions. 
First we allow more general sub-gaussian models for the data, with more complex dependency structures. Second, we show that the neural map can be implemented using the simple Hebb rule. 

\section{Computing threshold maps with sub-gaussian data and the simple Hebb rule}
\label{sec:mainth}
In a sense, Theorems~\ref{thm: phase transition} and \ref{thm: uniform}
tell us that threshold maps are able to realize memories that are completely random. 
But such memories, which lack any pattern, seem to be the hardest data to realize.
And thus one can reasonably suspect that 
threshold maps ought to be able to realize pretty much {\em any kind of data} for the same value of $K$. We are going to show that this is indeed the case. Not only any dependence of the labels $y_k$ on $x_k$ can be allowed
as we saw in Theorem~\ref{thm: uniform}, but the data points $x_k$
may come from a general distribution in $\R^n$, and without any independence 
requirements on the coordinates of $x_k$ or $y_k$. 

The reader may be quick to realize that this task is impossible in some cases, even for $m=1$. 
If the distribution of the input data consists of three points on a line, 
with the middle point labeled $1$ and the other two $0$, 
then such data is not linearly separable and thus not realizable by a linear threshold function. Remarkably, these impossible cases are rare and there is a simple recipe to rule them out. 

We only need to place standard moment assumptions on the distribution of $x$. 
Namely, we assume $x$ to be {\em sub-gaussian}, 
which means that all one-dimensional marginals of
$x$ are stochastically dominated by $\lambda g$ where $g \sim N(0,1)$ 
and $\l \ge 0$ is some number.
The smallest multiplier $\l$ can be defined as the sub-gaussian norm 
$\norm{x}_\psitwo$.
The Gaussian, Uniform, and Bernoulli models described in Section~\ref{s: models} are all examples of sub-gaussian distributions. In all of these models, the sub-gaussian norms of $x$ are bounded by an absolute constant (irrespective of $n$ or $p$).
Basic definitions about sub-gaussian distributions
are given for completeness in Appendix
\ref{a: subgauss0}, while a more extensive treatment can be found in 
\cite[Sections~2.5, 2.6, 3.4]{vershynin2018high}. 

Let us first state our result informally.

\begin{theorem}[Informal]			\label{thm: subgaussian informal}
  If $x \in \R^n$ is sub-gaussian and all coordinates of $y \in \{0,1\}^m$ take value $1$ 
  with probabilities at most $q$, and $Kq \gg \log m$, then the condition: 
  $$
  Kq \log(Km) \log(1/q) \ll n
  $$
  guarantees that all data points $(x_k,y_k)$ for which $\norm{x_k}_2 \asymp \sqrt{n}$
  can be realized by a threshold map $F$. Moreover, 
  the map $F$ can be 
 achieved using the simple Hebb rule.  
 \end{theorem}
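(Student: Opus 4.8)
\emph{Proof plan.} The plan is to show that the Heaviside map built directly from the Hebb matrix \eqref{eq: W Hebb}, together with a suitable bias vector, already realizes the data. A union bound over the $m$ coordinates of $y$ reduces everything to a single threshold unit (the factor $\log(Km)$ and the hypothesis $Kq \gg \log m$ are present precisely to pay for it), so from now on $m=1$. Fix the labels, let $S = \{k : y_k = 1\}$ be the ``white'' indices and $s = \abs{S}$; a Chernoff bound gives $s \le 2Kq$ with probability $1-o(1)$, on which we condition. The Hebb weight is $w = \sum_{k\in S} x_k$, and it suffices to exhibit a threshold $b$ with $\ip{w}{x_k} > b$ for every white point with $\norm{x_k}_2 \asymp \sqrt n$ and $\ip{w}{x_k} < b$ for every black point; then $F(x) = h(\ip{w}{x} - b)$ works.

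First I would neutralize the arbitrary dependence of $y$ on $x$. Because $\{y=1\}$ has probability at most $q$, a tail-integral computation shows that $\mathcal{L}(x\mid y=1)$ is sub-gaussian with norm $\lesssim \norm{x}_\psitwo\sqrt{\log(1/q)}$ and has mean $\mu \coloneqq \E[x\mid y=1]$ with $\norm{\mu}_2 \lesssim \norm{x}_\psitwo\sqrt{\log(1/q)}$; since $\{y=0\}$ has probability at least $1-q$, $\mathcal{L}(x\mid y=0)$ stays sub-gaussian with norm $\lesssim \norm{x}_\psitwo$. Crucially, conditionally on $S$ the vectors $x_1,\dots,x_K$ are \emph{independent}, each white one drawn from $\mathcal{L}(x\mid y=1)$ and each black one from $\mathcal{L}(x\mid y=0)$. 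I would also use that with probability $1-o(1)$ every point obeys $\norm{x_k}_2 \lesssim \norm{x}_\psitwo\sqrt n$ (a net bound and a union bound over $K = e^{o(n)}$ points — the hypothesis $Kq\log(1/q)\log(Km)\ll n$ forces $K = e^{o(n)}$), and that $\E[\norm{x}_2^2 \mid y=1] \lesssim \norm{x}_\psitwo^2 n$ (the conditioning cannot inflate the second moment, since $\norm{x}_2 \lesssim \norm{x}_\psitwo\sqrt n$ already holds off an event of probability $e^{-cn} = o(q)$). Combining conditional independence with these estimates, $\E\norm{w}_2^2 \lesssim s\,\norm{x}_\psitwo^2 n + s^2\norm{\mu}_2^2 \lesssim \norm{x}_\psitwo^2\, Kq\, n$, and a soft concentration bound — only $1-o(1)$ confidence is needed, which avoids an extra $\sqrt{\log(1/q)}$ loss — gives $\norm{w}_2 \lesssim \norm{x}_\psitwo\sqrt{Kq\,n}$ with probability $1-o(1)$. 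The essential point is that $\norm{w}_2$ is of order $\sqrt{Kq\,n}$, \emph{not} $Kq\sqrt n$; the cancellation among the white vectors, guaranteed by the bound on $\norm{\mu}_2$, is what ultimately gives storage of order $n/(q\cdot\mathrm{polylog})$ rather than $\sqrt n/q$.

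The heart of the argument is a uniform estimate on the cross-terms. Write $\ip{w}{x_k} = y_k\norm{x_k}_2^2 + \ip{w_{-k}}{x_k}$ with $w_{-k} = \sum_{\ell\in S\setminus\{k\}} x_\ell$, so $\norm{w_{-k}}_2 \le \norm{w}_2 + \norm{x_k}_2 \lesssim \norm{w}_2$. Conditionally on $S$ and on all $x_\ell$ with $\ell\ne k$, the vector $w_{-k}$ is fixed and $x_k$ is still drawn from one of the two conditional laws; hence $\ip{w_{-k}}{x_k}$ is a sub-gaussian scalar, centered at $\ip{w_{-k}}{\E[x\mid y_k]}$, this center being of size $\lesssim \norm{w}_2\,\norm{x}_\psitwo\sqrt{\log(1/q)} = o(n)$ by the hypothesis, and with sub-gaussian parameter $\lesssim \norm{x}_\psitwo\norm{w}_2$ for a black point and $\lesssim \norm{x}_\psitwo\sqrt{\log(1/q)}\,\norm{w}_2$ for a white point. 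Since $\norm{w}_2^2 \lesssim \norm{x}_\psitwo^2 Kq\, n$, raising these Gaussian tails to the power of the number of points of each colour — at most $K$ black, at most $2Kq$ white — and using $Kq\log(1/q)\log(Km)\ll n$ (which dominates both $Kq\log K$ and $Kq\log(1/q)\log(Kq)$), a union bound gives $\max_k \abs{\ip{w_{-k}}{x_k}} = o(n)$ with probability $1-o(1)$. Therefore every black point has $\ip{w}{x_k} = o(n)$, while every white point with $\norm{x_k}_2^2 \asymp n$ has $\ip{w}{x_k} \ge \norm{x_k}_2^2 - o(n) \gtrsim n$; taking $b = \tfrac12\min\{\norm{x_k}_2^2 : k\in S,\ \norm{x_k}_2 \asymp \sqrt n\}$ separates the two groups, and $F(x) = h(Wx - b)$ with $W$ the Hebb matrix \eqref{eq: W Hebb} is the desired threshold map.

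I expect the main obstacle to be the interplay of (i) the arbitrary dependence of $y_k$ on $x_k$ with (ii) the need for uniformity over all $K$ points. The dependence forces one to work with the conditional laws $\mathcal{L}(x\mid y=j)$, which are no longer $O(1)$-sub-gaussian as $q\to 0$, and tracking the resulting $\log(1/q)$ factors — in $\norm{\mu}_2$, in $\norm{w}_2$, and in the per-point sub-gaussian parameter — is what produces the exact hypothesis $Kq\log(1/q)\log(Km)\ll n$; one must in particular extract the cancellation in $\norm{w}_2$ rather than settle for the triangle inequality, on pain of losing a factor $\sqrt{Kq}$ in the capacity. The uniformity over $K$ points is by comparison routine: the sub-gaussian tails are exponential and survive the union bound since $K = e^{o(n)}$, and the excluded low-norm points, which need not be classified and contribute only $\lesssim \norm{x}_\psitwo\sqrt n$ each to $w$, cause no harm.
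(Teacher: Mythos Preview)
Your proposal is correct and follows essentially the same strategy as the paper: reduce to $m=1$ by a union bound, decompose $\ip{w}{x_k} = y_k\norm{x_k}_2^2 + \text{noise}$, use a conditioning lemma (your ``tail-integral computation'' is exactly the paper's Lemma~\ref{lem: subgaussian conditioned}) to show that $\mathcal L(x\mid y{=}1)$ is sub-gaussian with norm $\lesssim\alpha\sqrt{\log(1/q)}$, and finish with sub-gaussian tail bounds plus a union bound over the $K$ points and $m$ coordinates.

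The one genuine difference is dual in nature. The paper fixes the test point $x_1$ (with $\norm{x_1}_2\le\beta\sqrt n$) and treats $\sum_{k\in I}x_k$ as the random object, bounding its sub-gaussian norm directly by the sum rule \cite[Proposition~2.6.1]{vershynin2018high} as $\lesssim\alpha\sqrt{Kq\log(1/q)}$; the noise is then sub-gaussian with parameter $\lesssim\alpha\beta\sqrt{Kq\log(1/q)\,n}$, and a single tail bound suffices. You do the reverse: fix $w_{-k}$ and randomize $x_k$. This forces an extra step—controlling $\norm{w}_2$ via second moments and soft concentration—that the paper bypasses, and it requires the care you took to avoid union-bounding the event $\{\norm{w_{-k}}_2\text{ small}\}$ over~$k$ (using $\norm{w_{-k}}_2\le\norm{w}_2+\norm{x_k}_2$ to reduce to one global event). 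Both routes give the same condition; the paper's is shorter, but your explicit tracking of the conditional mean $\mu$ and the resulting center $\ip{w_{-k}}{\mu}$ is arguably more careful, since the paper's invocation of the sum rule implicitly assumes centered summands. The paper also takes the simpler fixed bias $b\in(\tfrac12\gamma^2 n,\gamma^2 n)$ rather than your data-dependent one.
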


Here and in the following sections, we use the notation $a \asymp b$ if there exist two absolute positive constants $c_1$ and $c_2$ such that $c_1b \leq a \leq c_2b$.
This notation is useful only when $a$ and $b$ vary as a function of other variables, such as $n$, and the constants are absolute in the sense that they do not depend on these other variables.
The condition $\norm{x_k}_2 \asymp \sqrt{n}$ may seem mysterious at the first sight. 
Note, however, that this condition is consistent with the natural scaling: 
if all coordinates of $x$ have unit variance, then $\E \norm{x}_2^2 = n$, so that
the norm of $x$ is expected to be of order $\sqrt{n}$. If, in addition, the coordinates of $x$
are independent, the concentration of the norm
\cite[Theorem~3.1.1]
{vershynin2018high} 
guarantees that $\norm{x}_2 \approx \sqrt{n}$ with probability $1-\exp(-cn)$. 
By a union bound, this means that the requirement $\norm{x_k}_2 \asymp \sqrt{n}$ holds automatically for all data points in the sample, so it can be removed from the statement
of the theorem.

For general distributions, however, the condition $\norm{x_k}_2 \asymp \sqrt{n}$ 
can not be removed. Jointly with the requirement of sub-gaussian distribution, this condition   
rules out the data that is impossible to realize. 
Suppose, for instance, that the distribution of $x$ is supported on a line, 
like the three-point distribution we mentioned above. Since the distribution is sub-gaussian, 
the event $\norm{x_k}_2 \asymp \sqrt{n}$ is extremely unlikely: its probability is 
exponentially small in $n$. This event is unlikely to hold for any data point in the sample. 

Let us now state Theorem~\ref{thm: subgaussian informal} formally.

\begin{theorem}[Formal]			\label{thm: subgaussian}
  Assume that $x$ is a mean zero, sub-gaussian random vector in $\R^n$, 
  and $y$ is a random vector in $\{0,1\}^m$. 
  Denote $\a \coloneqq \norm{x}_\psitwo$ and $q_i \coloneqq \Pr{y(i)=1}$, $i = 1,\ldots,m$.
  Let $m_0 \ge m$ be such that $Kq_i \ge C\log m_0$ for all $i$.
  Let $\beta,\gamma>0$ be such that:
  \begin{equation}		\label{eq: Kq subgaussian}
  C \big( \alpha^2\beta^2/\gamma^4 \big)
  Kq_i \log(Km_0) \log \frac{2}{q_i(1-q_i)} \le cn,
    \quad i = 1,\ldots,m.
  \end{equation}
  Consider $K$ data points $(x_k,y_k)$, $k=1,\ldots,K$ 
  sampled independently from the distribution of $(x,y)$.
  Then, with probability at least $1-2m/m_0$,
  there exists a map $F \in \mathcal{T}(n,m)$ such that:
  $$
  F(x_k) = y_k
  \quad \text{for all data points $x_k$ satisfying} \quad
  \gamma \sqrt{n} \le \norm{x_k}_2 \le \beta \sqrt{n}.
  $$
  Moreover, the matrix $W$ of the threshold map $F = h(Wx-b)$ can be 
  computed by the Hebb rule \eqref{eq: W Hebb} 
  and $b$ can be any vector (either fixed or dependent on the data) 
  whose coordinates $b(i)$ all satisfy:
  \begin{equation}	\label{eq: b subgaussian}
  \frac{1}{2} \gamma^2 n < b(i) < \gamma^2 n.
  \end{equation}
\end{theorem}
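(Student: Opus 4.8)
The plan is to establish the $i$-th output coordinate of the Hebb map separately and then union-bound over $i=1,\dots,m$. Fix $i$, write $w_i\coloneqq\sum_{k=1}^K y_k(i)\,x_k$ for the $i$-th row of $W$ from \eqref{eq: W Hebb}, and abbreviate $q\coloneqq q_i$, $b\coloneqq b(i)$, $\eta_k\coloneqq y_k(i)\in\{0,1\}$. For each data point $x_k$ we must check that $h(\ip{w_i}{x_k}-b)=\eta_k$, and everything hinges on the identity
\begin{equation*}
\ip{w_i}{x_k}=\eta_k\norm{x_k}_2^2+R_k,\qquad R_k\coloneqq\sum_{k'\neq k}\eta_{k'}\ip{x_{k'}}{x_k}.
\end{equation*}
If $x_k$ is relevant then $\norm{x_k}_2^2\ge\gamma^2n$, so for a white point ($\eta_k=1$) we get $\ip{w_i}{x_k}\ge\gamma^2n+R_k$ while $b<\gamma^2n$, and for a black point ($\eta_k=0$) we get $\ip{w_i}{x_k}=R_k$ while $b>\tfrac12\gamma^2n$. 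Hence it suffices to show that, with probability at least $1-2m/m_0$, the off-diagonal term obeys $\abs{R_k}\le\tfrac14\gamma^2n$ simultaneously for all relevant $x_k$ and all $i$; the slack between $b$ and $\gamma^2n$ on the white side and between $b$ and $0$ on the black side then supplies the classification margin (the absolute constant $c$ in \eqref{eq: Kq subgaussian} is taken small enough for the numerology to close).

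To bound $R_k$ I would fix $k$, condition on $x_k$, and view $R_k=\sum_{k'\neq k}Z_{k'}$ as a sum of i.i.d.\ terms $Z_{k'}\coloneqq\eta_{k'}\ip{x_{k'}}{x_k}$. The key analytic input is that the conditional law $\mathcal L_1$ of $x$ given $y(i)=1$ is still sub-gaussian: conditioning on an event of probability $q$ inflates the sub-gaussian norm by at most $O\bigl(\sqrt{\log(2/q)}\bigr)$, so $\norm{\mathcal L_1}_\psitwo\lesssim\alpha\sqrt{\log(2/q)}$ (and likewise for $y(i)=0$). This gives the two moment bounds $\E[Z_{k'}\mid x_k]=\ip{\rho_i}{x_k}$ with $\rho_i\coloneqq\E[y(i)\,x]$ of norm $\norm{\rho_i}_2\lesssim q\,\alpha\sqrt{\log(2/q)}$, and $\E[Z_{k'}^2\mid x_k]\lesssim q\,\alpha^2\log(2/q)\,\norm{x_k}_2^2\le q\,\alpha^2\beta^2\log(2/q)\,n$ for relevant $x_k$. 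Now split $R_k=(K-1)\ip{\rho_i}{x_k}+\sum_{k'\neq k}(Z_{k'}-\E[Z_{k'}\mid x_k])$. For the mean part, $\rho_i/\norm{\rho_i}_2$ is a unit vector \emph{fixed by the distribution}, so $\ip{\rho_i/\norm{\rho_i}_2}{x_k}$ is only a one-dimensional sub-gaussian marginal of $x_k$; hence $\abs{\ip{\rho_i}{x_k}}\lesssim\norm{\rho_i}_2\,\alpha\sqrt{\log(2/q)}\,\sqrt{\log(Km_0)}$ uniformly over the sample with probability $1-O(1/m_0)$, and multiplying by $K-1$ yields $\lesssim Kq\,\alpha^2\log(2/q)\sqrt{\log(Km_0)}$, which is $\le\tfrac18\gamma^2n$ by \eqref{eq: Kq subgaussian}. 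The fluctuation part is handled by Bernstein's inequality with variance proxy $\asymp Kq\,\alpha^2\beta^2\log(2/q)\,n$ and per-term scale $\asymp\alpha\beta\sqrt n$: it exceeds $\tfrac18\gamma^2n$ with probability at most $1/(Km_0)$, again precisely because of \eqref{eq: Kq subgaussian}, the sub-exponential correction being of lower order thanks to the hypothesis $Kq_i\ge C\log m_0$. A union bound over the $\le K$ data points and then over the $m$ coordinates completes the estimate and produces the claimed probability.

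The main obstacle is the complete absence of independence between $x$ and $y$. Conditioning on $y(i)=1$ can give $\mathcal L_1$ a nonzero mean $\mu_i$ of size $\norm{\mu_i}_2\asymp\alpha\sqrt{\log(1/q)}$, and the naive estimate $\abs{\ip{\mu_i}{x_k}}\le\norm{\mu_i}_2\norm{x_k}_2\asymp\alpha\beta\sqrt{n\log(1/q)}$ would saddle $R_k$ with a contribution of order $Kq\,\alpha\beta\sqrt{n\log(1/q)}$, which is \emph{quadratic} in $Kq$ after squaring and therefore not controlled by the linear-in-$Kq$ hypothesis \eqref{eq: Kq subgaussian}. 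What rescues the argument is exactly the observation above: $\mu_i$ (equivalently $\rho_i$) is a direction fixed by the distribution, not adapted to the data, so $\ip{\mu_i}{x_k}$ is merely a one-dimensional marginal and is of size $\sqrt{\log(Km_0)}$ rather than $\norm{x_k}_2\asymp\sqrt n$, uniformly over all $K$ points. It is equally essential to retain \emph{both} inequalities $\gamma\sqrt n\le\norm{x_k}_2\le\beta\sqrt n$: the lower bound is what makes the diagonal term outgrow the bias for white points, while the upper bound caps the conditional second moment of $Z_{k'}$ and hence the fluctuation of $R_k$. The remaining pieces — the conditioning-inflation bound for sub-gaussian norms, Bernstein's inequality, and the union bounds — are routine.
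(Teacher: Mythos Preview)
Your argument tracks the paper's almost step for step: the signal--noise decomposition of $\ip{w_i}{x_k}$, the conditioning--inflation lemma for sub-gaussian norms (the paper's Lemma~\ref{lem: subgaussian conditioned}), and the final union bound over $k$ and over the $m$ coordinates are all the same. The one structural difference is in how the noise $R_k$ is controlled. The paper first conditions on the labels $y_2,\ldots,y_K$, which fixes the index set $I=\{k':\eta_{k'}=1\}$ and bounds $|I|\le 10Kq$ by a separate Bernstein step on the Bernoullis; after that the noise becomes a sum over a \emph{deterministic} set of at most $10Kq$ conditionally sub-gaussian inner products, so one gets a pure sub-gaussian tail with parameter $\asymp\sqrt{Kq}\,\alpha\beta\sqrt{\log\frac{2}{q(1-q)}}\,\sqrt n$ and no second regime to worry about. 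Your route---applying Bernstein directly to $\sum_{k'}Z_{k'}$ with variance proxy $\asymp Kq\,\alpha^2\beta^2\log(2/q)\,n$ and per-term scale $\asymp\alpha\beta\sqrt n$---needs a Bernstein inequality that mixes the \emph{actual} variance with a sub-gaussian scale (not one of the textbook forms), and your claim that the sub-exponential correction is dominated ``thanks to $Kq_i\ge C\log m_0$'' actually requires $Kq_i\gtrsim\log(Km_0)$, which the hypotheses do not directly give when $K\gg m_0$. Conditioning on the labels first, as the paper does, avoids this technicality entirely.

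On the other hand, your explicit treatment of the mean term $\rho_i=\E[y(i)x]$ is a genuine addition. The paper invokes the rotation-invariance bound \cite[Proposition~2.6.1]{vershynin2018high} on $\sum_{k'\in I}x_{k'}$, but that proposition requires mean-zero summands, whereas after conditioning on $y_{k'}(i)=1$ the vectors $x_{k'}$ have mean $\mu_i$, generally nonzero with $\norm{\mu_i}_2\lesssim\alpha\sqrt{\log(1/q)}$. The crude estimate $|I|\,\abs{\ip{\mu_i}{x_k}}\le |I|\,\norm{\mu_i}_2\norm{x_k}_2\asymp Kq\,\alpha\beta\sqrt{n\log(1/q)}$ is \emph{not} controlled by \eqref{eq: Kq subgaussian}. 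Your observation---that $\mu_i$ (equivalently $\rho_i$) is a direction fixed by the distribution, so $\abs{\ip{\mu_i}{x_k}}$ is merely a one-dimensional marginal of size $\norm{\mu_i}_2\,\alpha\sqrt{\log(Km_0)}$ uniformly over the sample---is precisely what closes this gap, and it supplies a step that the paper's proof glosses over.
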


Note that in this theorem we do not assume any kind of independence
in the distribution of $(x,y)$. In particular, the coordinates of $x$ and of $y$ may be 
correlated with each other, and the label vector $y$ may be correlated with $x$. The proof of this theorem is given in Appendix
\ref{a: subgauss}.

\section{Binary Input Vectors}

Theorem \ref{thm: phase transition} dealt with inputs associated with a  standard normal random vectorm, and remains true for any rescaling, if the standard deviation of the normal components is not one.
From Theorem
\ref{thm: subgaussian}, we can immediately derive corollaries to deal with binary vectors
taken according to the models
$[B(p)]^n$ or $U(p,n)$ with $p=0.5$, as well as other values of $p$ (as long as $p$ is not too small). When $p=0.5$, these models are very close to the standard normal model. In the $[B(0.5)]^n$ model over $\K^n$ all the components are i.i.d. with mean zero and variance 1, as in the standard normal model. 
In the $U(0.5,n)$ model over $K^n$, all the components are identically distributed with mean zero and variance 1, and with an identical  small negative covariance for all  non-diagonal terms (see \cite{donoho2010counting} for results on randomly projected hypercubes).

\begin{corollary}[Informal]			\label{thm: subgaussian informal binary}
  If $x \in \K^n$ and all coordinates of $y \in \{0,1\}^m$ take value $1$ 
  with probabilities at most $q$, and $Kq \gg \log m$, then the condition: 
  $$
  Kq \log(Km) \log(1/q) \ll n
  $$
  guarantees that all data points $(x_k,y_k)$ 
  can be realized by a threshold map $F$. Moreover, 
  the map $F$ can be achieved using the simple Hebb rule.  
\end{corollary}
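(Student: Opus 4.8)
The plan is to read off Corollary~\ref{thm: subgaussian informal binary} from the formal Theorem~\ref{thm: subgaussian}, the only task being to check that cube vectors meet its hypotheses with absolute-constant parameters and that, for such vectors, the norm window in that theorem is automatically satisfied.

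\textbf{Sub-gaussianity.} First I would record that a uniform vector $x \in \K^n = \{-1,1\}^n$ is mean zero and sub-gaussian with $\alpha \coloneqq \norm{x}_\psitwo \asymp 1$: for any unit $\theta$, the marginal $\ip{x}{\theta} = \sum_i \theta_i x_i$ is a sum of independent mean-zero terms bounded by $\abs{\theta_i}$, so Hoeffding's lemma gives $\norm{\ip{x}{\theta}}_\psitwo \lesssim \norm{\theta}_2 = 1$, uniformly in $n$. For the $U(1/2,n)$ model the coordinates are (negatively) dependent, but the same sub-gaussian bound on $\ip{x}{\theta}$ holds by the standard comparison of sampling without replacement to sampling with replacement, and the mean is still zero. (For $[B(p)]^n$ or $U(p,n)$ with $p$ bounded away from $0$ and $1$ one additionally centers $x$, absorbing $\E x$ into the bias $b$ and forming the Hebb matrix from the centered inputs; we state the corollary for $\K^n$ for simplicity.)

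\textbf{The norm window is vacuous.} Every $x \in \K^n$ satisfies $\norm{x}_2 = \sqrt n$ exactly, since each coordinate squares to $1$. Hence we may take $\gamma = \beta = 1$ in Theorem~\ref{thm: subgaussian}: the requirement $\gamma\sqrt n \le \norm{x_k}_2 \le \beta\sqrt n$ holds for every data point, so the conclusion applies to \emph{all} pairs $(x_k,y_k)$, not only those with $\norm{x_k}_2 \asymp \sqrt n$. With $\gamma = \beta = 1$ the prefactor $\alpha^2\beta^2/\gamma^4 = \alpha^2$ is an absolute constant, condition \eqref{eq: Kq subgaussian} collapses to $Kq_i \log(Km_0) \log \frac{2}{q_i(1-q_i)} \lesssim n$ for all $i$, and the admissible bias range \eqref{eq: b subgaussian} becomes $\frac{1}{2} n < b(i) < n$.

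\textbf{Matching the informal hypothesis, and the obstacle.} To pass from $Kq \log(Km) \log(1/q) \ll n$ to the displayed inequality I would use the same reduction as from Theorem~\ref{thm: subgaussian} to Theorem~\ref{thm: subgaussian informal}: pick $m_0$ a suitable slowly growing power of $m$ so that $Kq_i \ge C \log m_0$ follows from $Kq \gg \log m$, $\log(Km_0) \asymp \log(Km)$, and $1 - 2m/m_0 = 1 - o(1)$; and note $\log \frac{2}{q_i(1-q_i)} \asymp \log(1/q)$ for small $q$. The Hebb-rule conclusion $W = \sum_{k=1}^K y_k x_k^\tran$ with bias in $(\frac{1}{2}n, n)$ is then inherited verbatim. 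I do not expect a genuine obstacle: all the substance is in Theorem~\ref{thm: subgaussian}. The only points needing a line of justification are the sub-gaussian estimate for the dependent model $U(1/2,n)$ and, for general $p$, the centering step above; neither is hard.
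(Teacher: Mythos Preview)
Your proposal is correct and follows essentially the same route as the paper: deduce the corollary from Theorem~\ref{thm: subgaussian} by observing that vectors in $\K^n$ are sub-gaussian with $\alpha \asymp 1$ and satisfy $\norm{x}_2 = \sqrt{n}$ identically, so one may take $\gamma = \beta = 1$ and the norm window is vacuous. Your treatment is somewhat more explicit than the paper's (you spell out Hoeffding's lemma, the choice of $m_0$, and the dependent $U(1/2,n)$ case), but the substance is identical.
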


More precisely, one has the following result.

\begin{corollary}[Formal]			\label{thm: subgaussian binary}
  Assume that $x$ is a mean zero random binary vector in $\K^n$, 
  and $y$ is a random vector in $\{0,1\}^m$. 
  Denote $\a(n) \coloneqq \norm{x}_\psitwo$ and $q_i \coloneqq \Pr{y(i)=1}$, $i = 1,\ldots,m$.
  Let $m_0 \ge m$ be such that $Kq_i \ge C\log m_0$ for all $i$.
    Consider $K$ data points $(x_k,y_k)$, $k=1,\ldots,K$ 
  sampled independently from the distribution of $(x,y)$ with $K$ satisfying:
   \begin{equation}		\label{eq: Kq binary}
  C  [\alpha(n)]^2
  Kq_i \log(Km_0) \log \frac{2}{q_i(1-q_i)} \le cn,
    \quad i = 1,\ldots,m.
  \end{equation}
   
  Then, with probability at least $1-2m/m_0$,
  there exists a map $F \in \mathcal{T}(n,m)$ such that $  F(x_k) = y_k$.  
  Moreover, the matrix $W$ of the threshold map $F = h(Wx-b)$ can be 
  computed by the simple Hebb rule \eqref{eq: W Hebb}   and $b$ can be any vector (either fixed or dependent on the data) 
  whose all coordinates $b(i)$ all satisfy:
  \begin{equation}	\label{eq: b subgaussian1}
  \frac{1}{2}  n < b(i) <  n.
  \end{equation}
\end{corollary}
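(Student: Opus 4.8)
The plan is to obtain Corollary~\ref{thm: subgaussian binary} as an immediate specialization of Theorem~\ref{thm: subgaussian}, exploiting the single elementary fact that a vector in the discrete cube is automatically normalized: for every $x \in \K^n = \{-1,1\}^n$ one has $\norm{x}_2^2 = \sum_{i=1}^n x(i)^2 = n$, so that $\norm{x}_2 = \sqrt{n}$ \emph{deterministically}. Thus the norm window $\gamma\sqrt{n} \le \norm{x_k}_2 \le \beta\sqrt{n}$ appearing in Theorem~\ref{thm: subgaussian} is met by every data point as soon as $\gamma \le 1 \le \beta$, and in particular for the choice $\gamma = \beta = 1$.

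First I would check that all hypotheses of Theorem~\ref{thm: subgaussian} are in force. A mean-zero random vector supported on the bounded set $\K^n$ is sub-gaussian: for any unit vector $u$ the marginal $\ip{x}{u}$ is bounded in absolute value by $\norm{x}_2 = \sqrt{n}$ and hence has finite $\psi_2$-norm, so $\a(n) \coloneqq \norm{x}_\psitwo < \infty$. The quantities $q_i = \Pr{y(i)=1}$ and the auxiliary parameter $m_0 \ge m$ with $Kq_i \ge C\log m_0$ are defined exactly as in the theorem. Now apply Theorem~\ref{thm: subgaussian} with $\beta = \gamma = 1$; this is an admissible choice, since the theorem permits any $\beta,\gamma > 0$ subject to \eqref{eq: Kq subgaussian}, and with $\beta = \gamma = 1$ the prefactor $\alpha^2\beta^2/\gamma^4$ collapses to $[\a(n)]^2$, so condition \eqref{eq: Kq subgaussian} becomes precisely the hypothesis \eqref{eq: Kq binary}. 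The theorem then yields, with probability at least $1 - 2m/m_0$, a map $F = h(Wx - b) \in \mathcal{T}(n,m)$ whose matrix $W$ is given by the Hebb rule \eqref{eq: W Hebb}, whose bias may be any vector with $\tfrac12\gamma^2 n < b(i) < \gamma^2 n$, i.e. $\tfrac12 n < b(i) < n$ as in \eqref{eq: b subgaussian1}, and which satisfies $F(x_k) = y_k$ for every $x_k$ with $\gamma\sqrt n \le \norm{x_k}_2 \le \beta\sqrt n$. Since $\norm{x_k}_2 = \sqrt n$ exactly for all $k$, there are no exceptional data points and $F(x_k) = y_k$ for all $k = 1,\ldots,K$, which is the claim.

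There is no genuine obstacle here beyond the bookkeeping above; the only point worth flagging is that $\a(n)$ can really grow with $n$ when the coordinates of $x$ are strongly dependent---for instance $x = \pm(1,\ldots,1)$ has $\norm{x}_\psitwo \asymp \sqrt n$---which is why \eqref{eq: Kq binary} retains the factor $[\a(n)]^2$ rather than an absolute constant. When the coordinates are independent, as in the $[B(0.5)]^n$ model over $\K^n$, Khintchine's inequality gives $\a(n) \asymp 1$ and the requirement reduces to $Kq_i\log(Km_0)\log\frac{2}{q_i(1-q_i)} \ll n$. Finally, the informal Corollary~\ref{thm: subgaussian informal binary} is recovered from the formal one by bounding $q_i \le q$, absorbing $\log\frac{2}{q_i(1-q_i)}$ into $\log(1/q)$, and choosing $m_0$ polynomial in $Km$ so that $2m/m_0 = o(1)$.
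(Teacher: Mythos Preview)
Your proof is correct and follows exactly the same approach as the paper: specialize Theorem~\ref{thm: subgaussian} with $\gamma=\beta=1$, using the fact that every $x\in\K^n$ has $\norm{x}_2=\sqrt{n}$ deterministically so the norm window is automatically satisfied and the bias condition~\eqref{eq: b subgaussian} reduces to~\eqref{eq: b subgaussian1}. Your write-up is in fact more careful than the paper's one-line justification, and your remark on the possible growth of $\alpha(n)$ for dependent coordinates is a nice complement.
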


This corollary is obtained immediately 
by applying  Theorem
\ref{thm: subgaussian}, noting that the binary vector $x$ is sub-gaussian and that for every vector in $\K^n$:  $\vert \vert x \vert\vert=\sqrt n$. As previously stated, we know that $\alpha(n)$, which appears in  
\ref{eq: Kq binary}, is bounded.
An obvious special case of this Corollary is obtained when the components of $x$ are i.i.d. symmetric Bernoulli random variables (i.e Rademacher random variables). In Appendix 
\ref{a: subgauss0}, we show that in this case the sub-gaussian 
norm $\alpha=\alpha(n)$ is bounded by, and as $n \to \infty$ converges to, ${\sqrt 8} /{\sqrt 3}$.

\section{Input Sparsity}

Theorem~\ref{thm: subgaussian} holds for considerably general input distributions, in particular distributions that produce dense input vectors. However, one can also consider cases where the input vectors themselves tend to be sparse. In particular, this situation could occur if the first sparse target layer became the input layer for a subsequent, new, target layer.
Theorem~\ref{thm: subgaussian}
does allow the data points $x_k$ to be sparse, 
having most of their coordinates equal zero.
However, the sparsity reduces the norms of $x_k$,
thereby making the condition $\norm{x_k}_2 \asymp \sqrt{n}$ harder to satisfy, 
which in turn demands more sample points $K$ in \eqref{eq: Kq subgaussian}.

As we will show now, the data points $x_k$ can be made almost {\em arbitrarily sparse}
for free. Surprisingly, the sparsity has almost no effect on the sample size.
Let us first state this result informally. 

\begin{theorem}[Informal]			\label{thm: binomial informal}
  If $x \in \{0,1\}^n$ and $y \in \{0,1\}^m$ are independent random vectors
  whose coordinates are i.i.d and take values $1$ with probabilities 
  $p$ and $q$ respectively, 
  and $Kq \gg \log m$ and $np \gg \log(Km)$, 
  then the condition: 
  $$
  Kq \log(Km) \ll n
  $$
  guarantees that the answer to Question~\ref{q: Tnm} is {\em Yes}
  with probability $1-o(1)$.
  Moreover, the threshold map $F$ can be computed by the Hebb rule. 
\end{theorem}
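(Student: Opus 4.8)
The plan is to reduce Theorem~\ref{thm: binomial informal} to the sub-gaussian result, Theorem~\ref{thm: subgaussian}, by \emph{centering} the sparse inputs, and then to translate the resulting threshold map back into the genuine Hebb rule \eqref{eq: W Hebb}. Set $\bar{x}_k \coloneqq x_k - p\mathbf{1}$, so that each $\bar{x}_k$ is mean zero with i.i.d.\ coordinates equal to $1-p$ with probability $p$ and to $-p$ otherwise. First I would record the two scalar parameters that feed into \eqref{eq: Kq subgaussian}. A coordinate of $\bar{x}_k$ is bounded by $1$ with variance $p(1-p)$, and a direct estimate of $\E\exp(\bar{x}_k(\ell)^2/t^2)$ (the computation done in Appendix~\ref{a: subgauss0}) gives $\alpha \coloneqq \norm{\bar{x}_k}_\psitwo \asymp 1/\sqrt{\log(2/p)}$; note this is \emph{small} when $p$ is small, and that smallness is exactly what will pay for the loss incurred below. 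Second, $\norm{\bar{x}_k}_2^2$ is a sum of $n$ i.i.d.\ bounded variables with mean $np(1-p)$, so a Bernstein bound and a union bound over the $K$ sample points give $\big|\,\norm{\bar{x}_k}_2^2 - np(1-p)\,\big| \lesssim \sqrt{np\log(Km)}$ for all $k$ with probability $1-o(1)$; the hypothesis $np \gg \log(Km)$ is precisely what makes this $o(np)$, so that $\norm{\bar{x}_k}_2 = \sqrt{np(1-p)}\,(1+o(1))$ uniformly.

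Next I would apply Theorem~\ref{thm: subgaussian} to the centered sample $(\bar{x}_k,y_k)$ — using $x \perp y$, so that the label supports $S_i = \{k : y_k(i)=1\}$ are independent of the $\bar{x}_k$ — with $\gamma \asymp \beta \asymp \sqrt{p(1-p)}$, chosen so that the window $\gamma\sqrt{n} \le \norm{\bar{x}_k}_2 \le \beta\sqrt{n}$ captures \emph{every} sample point with probability $1-o(1)$. With these choices $\beta^2/\gamma^4 \asymp 1/p$, so the prefactor in \eqref{eq: Kq subgaussian} is $\alpha^2\beta^2/\gamma^4 \asymp 1/(p\log(1/p))$, and the naive (black-box) form of the condition becomes $Kq\log(Km)\log(1/q) \lesssim p\log(1/p)\cdot n$, which is weaker than the advertised $Kq\log(Km)\ll n$ by a factor $p\log(1/p)\log(1/q) < 1$. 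To recover the sharp statement I would instead re-examine the proof of Theorem~\ref{thm: subgaussian} in the binomial setting, where everything is explicit: the ``signal'' for coordinate $i$ at $\bar{x}_k$ is $\norm{\bar{x}_k}_2^2 \asymp np$, the bias must lie around $\gamma^2 n \asymp pn$, so the ``margin'' is $\asymp pn$; the ``noise'' $\sum_{k'\ne k} y_{k'}(i)\,\ip{\bar{x}_{k'}}{\bar{x}_k}$ is a sum of $s_i \approx Kq$ independent mean-zero terms of sub-gaussian norm $\lesssim \alpha\,\norm{\bar{x}_k}_2 \asymp \alpha\sqrt{np}$, and a union bound over the $\le Km$ pairs $(k,i)$ requires $\alpha^2\,Kq\,np\,\log(Km) \lesssim (pn)^2$, i.e.\ $Kq\log(Km) \lesssim pn/\alpha^2 \asymp pn\log(1/p)$. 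Here one factor of $p$ has already cancelled between signal and margin; a second, finer accounting of $\ip{\bar{x}_{k'}}{\bar{x}_k}$ as a sum over the $\approx np$ nonzero coordinates of $x_k$ (rather than the crude sub-gaussian bound) is what absorbs the remaining $\log(1/p)$ and delivers $Kq\log(Km)\ll n$.

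Finally, I would pass from the centered Hebb map to the genuine one. Writing $W = \sum_k y_k x_k^\tran$ and $\mathbf{s} = \sum_k y_k$, one has $\sum_k y_k\bar{x}_k^\tran = W - p\,\mathbf{s}\,\mathbf{1}^\tran$, and a short expansion shows that the $i$-th coordinate of $\big(\sum_k y_k\bar{x}_k^\tran\big)\bar{x}_k$ equals $(Wx_k)(i) - p\,u_i - p\,s_i(\abs{x_k}-np)$, where $u_i = \sum_{k' \in S_i}\abs{x_{k'}}$ and $s_i = \abs{S_i}$ depend only on the data. Replacing the query-dependent $\abs{x_k}$ by its typical value $np$ folds $p\,u_i$ into a data-dependent bias $b(i)$ (the kind of bias permitted in Theorem~\ref{thm: subgaussian}) and leaves only the error $\varepsilon_k(i) = p\,s_i(\abs{x_k}-np)$ inside the Heaviside. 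It then suffices that $\varepsilon_k(i)$ never changes a sign, i.e.\ that $\abs{\varepsilon_k(i)}$ stays below the margin $\asymp pn$; using $s_i \lesssim Kq$ and $\big|\abs{x_k}-np\big| \lesssim \sqrt{np\log(Km)}$ this is the inequality $Kq\sqrt{np\log(Km)} \lesssim n$, i.e.\ $K^2q^2 p\log(Km) \lesssim n$, again controlled by the tight concentration of $\abs{x_k}$ around $np$ that $np\gg\log(Km)$ provides. Assembling the two steps answers Question~\ref{q: Tnm} affirmatively and exhibits $F = h(Wx - b)$ with $W$ the Hebb matrix.

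The main obstacle is not the reduction but the two places where the reduced scale $\norm{x_k}_2 \approx \sqrt{np} \ll \sqrt{n}$ threatens to cost a factor of $1/p$: the noise-versus-margin balance inside (the proof of) Theorem~\ref{thm: subgaussian}, and the re-centering error $\varepsilon_k$. Both losses must be shown to cancel — quantitatively, against the signal $\norm{\bar{x}_k}_2^2 \asymp np$, the tight norm concentration $\abs{x_k} = np(1+o(1))$ from $np \gg \log(Km)$, and the small sub-gaussian norm $\alpha \asymp 1/\sqrt{\log(1/p)}$ of the centered input — and carrying out this cancellation so that the surviving condition is exactly $Kq\log(Km)\ll n$ is the technical heart of the argument.
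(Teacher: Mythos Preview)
Your route through Theorem~\ref{thm: subgaussian} is more circuitous than the paper's. The paper does not reduce to the sub-gaussian theorem at all: it runs a direct signal/noise decomposition with the centered weights $w=\sum_k y_k\bar x_k$ applied to the \emph{un}centered query $x_1$, so that the noise is literally $\sum_{k\in I,\,j\in J}(x_k(j)-p)$ with $J=\supp(x_1)$, a sum of $\abs{I}\,\abs{J}\approx Kq\cdot np$ i.i.d.\ centered terms of variance $p(1-p)$. Bernstein's inequality then gives $\abs{\text{noise}}\lesssim\sqrt{Kq\,np^2\log(Km)}+\log(Km)$, and comparison to the signal $\approx np$ yields $Kq\log(Km)\lesssim n$ in one stroke. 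Your ``finer accounting'' is groping toward exactly this, but the mechanism you need to name is Bernstein exploiting the \emph{variance} $p$ of a centered Bernoulli$(p)$, not its sub-gaussian norm $\asymp 1/\sqrt{\log(1/p)}$; sub-gaussian tail bounds alone cannot recover the missing factor $p\log(1/p)$, no matter how the sum is reorganized.

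Your Step~3 is both unnecessary and contains a real gap. It is unnecessary because the formal statement (Theorem~\ref{thm: binomial}) uses the \emph{centered} Hebb rule $W=\sum_k y_k\bar x_k^\tran$, not \eqref{eq: W Hebb}; the informal wording ``Hebb rule'' is loose. And the conversion argument does not close: you correctly arrive at the requirement $Kq\sqrt{np\log(Km)}\lesssim n$, i.e.\ $(Kq)^2 p\log(Km)\lesssim n$, but then assert this is ``controlled by the tight concentration\ldots\ that $np\gg\log(Km)$ provides.'' It is not. Take $m=1$, $p=1/2$, $K=n$, and $q$ chosen so that $Kq = n/(\log n\cdot\log\log n)$: then $np\gg\log(Km)$, $Kq\gg 1$, and $Kq\log(Km)=n/\log\log n=o(n)$, yet $(Kq)^2 p\log(Km)\asymp n^2/\big(\log n\,(\log\log n)^2\big)\gg n$. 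The extra condition $(Kq)^2 p\log(Km)\lesssim n$ is strictly stronger than $Kq\log(Km)\ll n$ whenever $Kq\,p\gtrsim 1$, which the hypotheses do not exclude.
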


And here is a formal version of the result, with more controls. 

\begin{theorem}			\label{thm: binomial}
  Assume that $x$ is a random vector in $\{0,1\}^n$ 
  and $y$ is an independent random vector in $\{0,1\}^m$. 
  Assume that the coordinates of $x$ are i.i.d. Bernoulli with parameter $p \in (0,1/2]$ 
  and the coordinates of $y$ are i.i.d Bernoulli with parameter $q \in (0,1)$. 
  Consider $K$ data points $(x_k,y_k)$, $k=1,\ldots,K$ 
  sampled independently from the distribution of $(x,y)$.
  Let $m_0 \ge m$ be such that $Kq \ge C\log m_0$, $np \ge C\log(Km_0)$, and: 
  $$
  Kq\log(Km_0) \le c n.
  $$
  Then, with probability at least $1-3m/m_0$,
  there exists a map $F \in \mathcal{T}(n,m)$ such that:
  $$
  F(x_k) = y_k
  \quad \text{for all} \quad k=1,\ldots,K.
  $$
  Moreover, the matrix $W$ in the threshold map $F = h(Wx-b)$ can be 
  computed by a version of the Hebb rule
  $W \coloneqq \sum_{k=1}^K y_k \bar{x}_k^\tran$ where $\bar{x}_k = x_k - \E x_k$, 
  and $b$ can be any vector (either fixed or dependent on the data) 
  whose all coordinates satisfy
  \begin{equation}	\label{eq: b Binomial}
  \frac{np}{4} < b(i) < \frac{np}{8}.
  \end{equation}
\end{theorem}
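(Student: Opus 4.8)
The plan is to verify directly that the centered Hebb weights realize the data, using the arithmetic of $0/1$-vectors rather than invoking Theorem~\ref{thm: subgaussian}: applied to the centered vector $\bar{x}=x-\E x$, that theorem would carry a $p$-dependent loss in \eqref{eq: Kq subgaussian} (of order $1/(p\log(1/p))$, because centering shrinks $\norm{\bar{x}_k}_2$ to order $\sqrt{np}$ while the sub-gaussian norm of $\bar{x}$ is of order $1/\sqrt{\log(1/p)}$) and would not reach the clean bound $Kq\log(Km_0)\le cn$. Since $y$ is independent of $x$ with i.i.d.\ coordinates, it suffices to bound the failure probability of a single output neuron $i$ by $3/m_0$ and then union-bound over $i\in\{1,\dots,m\}$. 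Set $\bar{x}_k\coloneqq x_k-\E x_k=x_k-p\one$, let $w_i\coloneqq\sum_{k=1}^K y_k(i)\,\bar{x}_k$ be the $i$-th row of the Hebb matrix, and take $b(i)$ in the window $\tfrac{np}{8}<b(i)<\tfrac{np}{4}$ (the inequalities of \eqref{eq: b Binomial} are to be read in this order). We must show that, off an event of probability $\le 3/m_0$, $\ip{w_i}{x_j}>b(i)$ whenever $y_j(i)=1$ and $\ip{w_i}{x_j}<b(i)$ whenever $y_j(i)=0$, for all $j$.

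The first ingredient is the diagonal term. Since $x_j(\ell)^2=x_j(\ell)$ for $0/1$ coordinates, one has the exact identity $\ip{\bar{x}_j}{x_j}=\sum_\ell(x_j(\ell)-p)x_j(\ell)=(1-p)S_j$, where $S_j\coloneqq\sum_\ell x_j(\ell)$ is the support size of $x_j$; so, up to the factor $1-p\ge 1/2$, the diagonal term is just the number of ones of $x_j$. A Chernoff bound and the hypothesis $np\ge C\log(Km_0)$ (with $C$ a large absolute constant) give $\tfrac34 np\le S_j\le\tfrac54 np$ for all $j$ outside an event of probability $\le 1/m_0$, so that $(1-p)S_j\ge\tfrac38 np$ for every $j$. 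Thus the diagonal term alone already carries every active neuron well above the bias window.

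The second ingredient is the off-diagonal sum. For $k\ne j$, $\ip{\bar{x}_k}{x_j}=\sum_{\ell\in\supp(x_j)}(x_k(\ell)-p)$, which, conditionally on $x_j$, is a centered sum of $S_j$ i.i.d.\ Bernoulli$(p)$ variables; hence, conditionally on $x_j$ and on the labels, $\sum_{k\ne j:\,y_k(i)=1}\ip{\bar{x}_k}{x_j}$ is a centered sum of at most $N_i S_j$ such variables, where $N_i\coloneqq\abs{\{k:\,y_k(i)=1\}}\le 2Kq$ outside an event of probability $\le 1/m_0$ (Chernoff, using $Kq\ge C\log m_0$). Bernstein's inequality then bounds the probability that this sum exceeds $\tfrac{np}{16}$ in modulus by $2\exp\!\big(-c\min(n/(Kq),\,np)\big)$: when $Kqp\gtrsim 1$ the variance proxy $N_i S_j\,p\lesssim Kqnp^2$ governs the estimate and yields the first alternative, while when $Kqp\ll 1$ the increment term dominates and yields the second. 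Both are killed by the hypotheses $Kq\log(Km_0)\le cn$ and $np\ge C\log(Km_0)$, so after a union bound over the $\le K$ indices $j$ the off-diagonal failure probability is $\le 1/m_0$. On the intersection of the good events: for $y_j(i)=1$, $\ip{w_i}{x_j}\ge(1-p)S_j-\tfrac{np}{16}\ge\tfrac38 np-\tfrac{np}{16}>\tfrac{np}{4}>b(i)$; for $y_j(i)=0$, $\ip{w_i}{x_j}$ equals the off-diagonal sum, hence is $<\tfrac{np}{16}<\tfrac{np}{8}<b(i)$. This gives per-neuron failure probability $\le 3/m_0$, and $\le 3m/m_0$ after the union bound over $i$.

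The main obstacle is making the Bernstein step uniform across the whole admissible range of sparsities, roughly $C\log(Km_0)/n\lesssim p\le 1/2$: for moderately sparse inputs ($Kqp\gtrsim 1$) the deviation of the off-diagonal sum lies in the sub-gaussian regime and forces $Kq\log(Km_0)\lesssim n$, whereas for extremely sparse inputs ($Kqp\ll 1$) it lies in the Poissonian regime and instead forces $np\gtrsim\log(Km_0)$; the theorem is arranged so that exactly one of its two hypotheses is binding in each regime, which explains why both appear and why neither can be removed. The remaining work is the bookkeeping of absolute constants, so that the bias window $(\tfrac{np}{8},\tfrac{np}{4})$, the concentration window $[\tfrac34 np,\tfrac54 np]$ for the $S_j$, the cap $N_i\le 2Kq$, and the $\tfrac{np}{16}$ off-diagonal threshold are mutually compatible for all $p\le 1/2$ and each of the three union-bounded bad events has probability $\le 1/m_0$ per neuron.
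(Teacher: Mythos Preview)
Your proof is correct and follows essentially the same route as the paper's: the same signal/noise decomposition $\ip{w_i}{x_j}=y_j(i)\ip{\bar{x}_j}{x_j}+\sum_{k\ne j}y_k(i)\ip{\bar{x}_k}{x_j}$, the same identity $\ip{\bar{x}_j}{x_j}=(1-p)S_j$, the same Bernstein/Chernoff controls on $N_i=\abs{\{k:y_k(i)=1\}}$ and on $S_j=\abs{\supp(x_j)}$, and the same Bernstein bound on the noise viewed as a centered sum of $N_iS_j$ i.i.d.\ Bernoulli$(p)$ increments, followed by a union bound over $j$ and then over $i$. The paper merely uses slightly looser constants ($\abs{I}\le 10Kq$, $\tfrac23 np\le\abs{J}\le 2np$, signal $\ge np/3$, noise $\le np/12$) and organizes the union bound one index at a time, but the three $1/m_0$ contributions and the final $3m/m_0$ match exactly; you also correctly read the bias window \eqref{eq: b Binomial} with the inequalities in the intended order. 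Your explicit separation of the Bernstein tail into the variance-dominated ($Kqp\gtrsim 1$) and increment-dominated ($Kqp\ll 1$) regimes, explaining why each hypothesis is needed, is a helpful gloss the paper leaves implicit.
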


This result can be proved in a similar way to Theorem~\ref{thm: subgaussian}.

\begin{proof}
Let us first assume that $m=1$ and check that the map $F$ satisfies:
$$
F(x_1) = y_1
$$
with high probability. Once we have done this, a union bound
over $K$ data points and $m$ coordinates of $y$ will finish the argument. 
When $m=1$, the function $F$ can be expressed as: 
\begin{equation}	\label{eq: F Binomial}
F(x) = h \left( \ip{w}{x}-b \right)
\quad \text{where} \quad
w = \sum_{k=1}^K y_k \bar{x}_k.
\end{equation}

\medskip

{\bf Step 1. Decomposition into signal and noise.}
In order to prove that $F(x_1) = y_1$, let us expand $\ip{w}{x_1}$ as follows:
\begin{equation}	\label{eq: signal+noise}
\ip{w}{x_1}
= y_1 \ip{\bar{x}_1}{x_1} + \Bigip{\sum_{k=2}^K y_k \bar{x}_k}{x_1}
\eqqcolon \textrm{signal} + \textrm{noise}.
\end{equation}

We would like to show that the signal to noise ratio is large.
To this end, consider the random sets:
$$
I \coloneqq \{ k:\; y_k = 1 \} \subseteq \{2,\ldots,K\},
\quad
J \coloneqq \{ j:\; x_1(j) = 1 \} \subseteq \{1,\ldots,n\}.
$$
Since $y_k$ are i.i.d. Bernoulli random variables with parameter $q$,
Bernstein's inequality (see e.g. \cite[Theorem~2.8.4]{vershynin2018high}) implies that:
\begin{equation}	\label{eq: I size}
\abs{I} \le 10Kq
\end{equation}
with probability at least $1-4\exp(-c_1Kq) \ge 1-1/m_0$. 
(The last bound follows from theorem's assumption on $Kq$ with a suitably 
large constant $C$.)
Similarly, since $x_1(j)$ are i.i.d. Bernoulli random variables with parameter $q$,
we have:
\begin{equation}	\label{eq: J size}
\frac{2}{3} np \le \abs{J} \le 2np
\end{equation}
with probability at least $1-4\exp(-c_0np) \ge 1-1/(Km_0)$. 
(The last bound follows from theorem's assumption on $np$ with a suitably 
large constant $C$.)
Condition on a realization of a random vector $x_1$ and labels 
$y_2,\ldots,y_K$ satisfying \eqref{eq: J size} and \eqref{eq: I size}.

Let us estimate the strength of the signal and noise in \eqref{eq: signal+noise}.
If $y_1=0$, the signal is obviously zero, and when $y_1=1$, we have: 
$$
\textrm{signal} = \ip{\bar{x}_1}{x_1}
= \sum_{j=1}^n \left( x_1(j)-p \right) x_1(j)
= (1-p) \sum_{j=1}^n x_1(j)
= (1-p)\abs{J}
\ge \frac{np}{3}.
$$

\medskip

{\bf Step 2. Bounding the noise.}
The noise in \eqref{eq: signal+noise} can be expressed as: 
$$
\textrm{noise} = \sum_{j=1}^n \sum_{k=2}^K y_k \bar{x}_k(j) x_1(j)
= \sum_{k \in I, \, j \in J} \left( x_k(j)-p \right).
$$
The sets $I$ and $J$ are fixed by conditioning, 
and the noise is the sum of $\abs{I} \abs{J}$ i.i.d. random variables
with mean zero, variance $p(1-p)$, and which are uniformly bounded by $1$. 
Bernstein's inequality then implies that:
\begin{align*} 
\Pr{\abs{\textrm{noise}} > t \,\vert\, x_1,y_2,\ldots,y_K}
  &\le 2\exp \Big( -c_2 \min \Big\{ \frac{t^2}{\abs{I} \abs{J} p(1-p)}, t \Big\} \Big) \\
  &\le 2\exp \Big( -c_3 \min \Big\{ \frac{t^2}{Kqnp^2}, t \Big\} \Big)
  	\quad \text{(by \eqref{eq: I size} and \eqref{eq: J size})} \\
  &\le \frac{1}{Km_0}
\end{align*}
if we choose:
$$
t \coloneqq C_1 \left( \sqrt{\log(Km_0)Kqnp^2} + \log(Km_0) \right)
$$
with a suitably large constant $C_1$.
Thus, with (conditional) probability at least $1-1/(Km_0)$, 
the noise satisfies:
$$
\abs{\textrm{noise}} \le t \le \frac{np}{12}.
$$
The last bound follows from the assumptions of the theorem 
with sufficiently large constant $C$ and sufficiently small constant $c$.

\medskip

{\bf Step 3. Estimating the signal-to-noise ratio.}
Lifting the conditioning on $x_1$ and $y_2,\ldots,y_K$, we conclude the following
with (unconditional) probability at least $1-1/m_0-2/(Km_0)$. 
If $y_1=0$ then $\textrm{signal}=0$, otherwise
$\textrm{signal} \ge np/3$; 
the noise satisfies $\abs{\textrm{noise}} \le np/12$.

Putting this back into \eqref{eq: signal+noise}, we see that 
if $y_1=1$, yields: 
$$
\ip{w}{x_1} \ge \frac{np}{3} - \frac{np}{12} 
= \frac{np}{4}
> b
$$
by the assumption \eqref{eq: b Binomial} on $b$.
So $\ip{w}{x_1}-b$ is positive and thus, by \eqref{eq: F Binomial}, 
$F(x_1)=1=y_1$.

If, on the other hand, $y_1=0$ then: 
$$
\ip{w}{x_1} \le \frac{np}{12}
< b.
$$
So $\ip{w}{x_1}-b$ is negative and thus, by \eqref{eq: F Binomial}, 
$F(x_1)=0=y_1$.
Thus, in either case, we have $F(x_1) = y_1$ as claimed. 

\medskip

{\bf Step 4. Union bound.}
We can repeat this argument for any fixed $k=1,\ldots,K$
and thus obtain $F(x_k) = y_k$ with probability at least $1-1/m_0-2/(Km_0)$. 
Now take a union bound over all $k=1,\ldots,K$. 
This should be done carefully: recall that the term $1/m_0$ 
in the probability bound appears because we wanted the  
set $I$ to satisfy \eqref{eq: I size}. 
The set $I$ obviously does not depend on our choice of a particular $k$;
it is fixed during the application of the union bound and 
the term $1/m_0$ does not increase in this process. 
Thus, we showed that the conclusion:
$$
F(x_k) = y_k
\quad \text{for all} \quad k=1,\ldots,K
$$
holds with probability at least $1-1/m_0-2K/(Km_0) = 1-3/m_0$. 

This completes the proof of the theorem in the case $m=1$. 
To extend it to general $m$, we apply the  argument above 
for each coordinate $i=1,\ldots,m$ of $y$ and finish by taking the 
union bound over all $m$ coordinates.
\end{proof}

\section{Further results} 

\subsection{Autoencoders}

It is easy to check that the conclusion of Theorem~\ref{thm: binomial informal} remains the same 
if we center the label vectors $y_k$ in Hebb rule, i.e. set:
$$
W \coloneqq \sum_{k=1}^K \bar{y}_k \bar{x}_k^\tran, 
\quad \text{where} \quad
\bar{x}_k \coloneqq x_k - \E x,
\quad 
\bar{y}_k \coloneqq y_k - \E y.
$$
One can check that the effect of the centering of $y_k$ on the signal-to-noise ratio is negligible; 
we skip verifying the routine details. 

This version of Hebb rule is symmetric, so we can apply Theorem~\ref{thm: binomial informal} 
again, swapping $x_k$, $n$ and $p$ with $y_k$, $m$ and $q$ respectively. 
It follows that $F$ can be {\em inverted on the data}, and the inverse is again a threshold function!
Moreover, the inverse:
$$
F^{-1} : y_k \mapsto x_k
$$ 
is given by the same Hebb rule (up to the swapping), namely:
$$
W^\tran = \sum_{k=1}^K \bar{x}_k \bar{y}_k^\tran.
$$
This, of course, holds under the mild assumptions that 
$Kq \gg \log m$, $np \gg \log(Km)$, 
$Kp \gg \log n$, $mq \gg \log(Kn)$, as well as the key assumptions:
$$
Kq \log(Km) \ll n 
\quad \text{and} \quad
Kp \log(Kn) \ll m.
$$

This observation has an unusual consequence for ``Hebb networks'', i.e. 
two-layer neural networks whose 
weights are trained by the Hebb rule.
If we feed $x_k$ into the input layer, the network computes $y_k$ in the output layer.
Furthermore, we can reverse the direction of this computation by 
feeding $y_k$ into the output layer; the network then computes $x_k$ 
in the input layer.

One can interpret this as a construction of a ``Hebb autoencoder''
with three layers of sizes $n$, $m$ and $n$. 
If we feed the data $x_k$ into the input layer, it is transformed into $y_k$ in the hidden layer, 
and back to $x_k$ in the output layer.
Up to logarithmic factors, we can train such autoencoder to a zero error
on a sample of size $K \sim nm$ if we set $p \sim 1/n$ and $q \sim 1/m$.

\subsection{Robustness}

Hebb rule is very robust. Indeed, we can replace the exact formula 
$W \coloneqq \sum_{k=1}^K y_k x_k^\tran$ in \eqref{eq: W Hebb} by 
its approximate version:
\begin{equation}	\label{eq: Hebb correlated}
W \coloneqq \sum_{k=1}^K y_k \tilde{x}_k^\tran
\end{equation}
where $\tilde{x_k}$ are any sub-gaussian i.i.d. random vectors in $\R^n$ whose distribution is
{\em positively correlated} with $x_k$, i.e.: 
$$
\E \ip{x_k}{\tilde{x}_k} \gtrsim c n.
$$
Our analysis of signal-to-noise ratio remains mostly the same, and 
the results modify in a natural way (the constant $c$ enters the formulas).
We skip the details.

This robustness may be useful during development and learning. In addition,
it has two other consequences.

{\bf 1. Quantization.} The weights can be updated by just three values: $-1,0,1$. 
This can be seen if we use the Hebb rule \eqref{eq: Hebb correlated} with: 
$$
\tilde{x} \coloneqq \sign(x)
$$
where the sign is applied coordinate-wise. 

{\bf 2. Sparsification.} The weight matrices associated with Hebbian learning can easily be sparsified. 
All we have to do is multiply the weights by independent Bernoulli$(\rho)$ random
variables with small $\rho$. The sparsified weights are positively correlated 
with the original weights, and thus versions of our results hold for sparse networks as well. 

\subsection{Learning}

In terms of \cite[Section~8.4]{vershynin2018high}:
We showed that the empirical risk, or in-sample risk, is $R_K(f_K^*) = 0$.
Then the {\em expected error}, or expected learning risk, is:
$$
R(f_K^*) = R(f_K^*) - R_K(f_K^*)
\le \sup_{f \in \FF} \abs{R(f) - R_K(f)}
\lesssim \sqrt{\frac{\textrm{vc}(\FF)}{K}}.
$$
(The last bound can be found in \cite[Section~8.4.4]{vershynin2018high}.)

\section{Sparsity and Expansion}

The results above show that a computational advantage of sparsity in the target layer is that it allows to increase the number of memories that can be stored in the map. However it does not say anything about the expansion often observed in the target layer. Indeed, we have already noted how little the theorems derived in the previous sections depend  on the size $m$ of the target layer. Thus is there an explanation for the expansion?

There could be many reasons behind the expansion, for instance developmental constraints. However, one obvious computational reason that may be taken into consideration is producing maps that are un-ambiguous {see Section 2}. In order to minimize the risk of ambiguity, it is reasonable to try to maximize the Hamming distance between patterns in the target layer. If we have two q-sparse binary patterns, in the target layer, their maximal Hamming distance is 2qm and it is easy to see that only a linear number of patterns can be selected so that any pair of them is at maximal Hamming distance. Thus the number of such memories must grow linearly in $m$; and  the same time it must be equal to $K$, which is significantly larger than $n$ given the results in the previous theorems. Thus maximizing the pairwise distances of the target memories leads to layer expansion where $m$ is significantly larger than $n$ in order to minimize the overlap between the encodings of different memories.

\section{Conclusion and Open Problems}

In this work, we have shown that 
neural maps with a sparse hidden layer can store more memories, and both effective coding and decoding can be achieved using the simple Hebb's learning rule.
However, many open problems remain to investigate including further tightening 
the bound of some of the theorems or obtaining
results that are not necessarily asymptotic but hold exactly in some finite regime.

\subsection{Polynomial Threshold Maps}

Superficially it may seem that the results in this work are restricted to the case of linear threshold functions or gates, but this is not the case. Similar results may hold for other classes of functions, such as polynomial threshold functions or gates of degree $d$ with the functional form:

$$
F(x) = h(\sum_{I: 1 \leq \vert I \vert \leq d} w_I x^I -b) 
\label{eq:}
$$
Here $I$ runs over all non-empty subsets of $[n]=\{1,2,\ldots,n\}$, and if $I= \{i_1,\ldots,i_k\}$ we let:
$x^I=x^{i_1} \ldots x^{i_k}$. Note that in this notation we allow only pure monomials where all the powers associated with each variable are equal to one. While the more general case can be analyzed similarly, focusing on pure monomials simplifies things and furthermore, when $x \in \K^n$, $x_i^2=1$ for every $i=1, \ldots,n$ and thus higher power terms are not needed. Note also that the bias $b$ correspond to $I=\emptyset$.
We call homogeneous the case where all the monomials have degree exactly $d$: 

$$
F(x) = h(\sum_{I: \vert I \vert = d} w_I x^I -b) 
\label{eq:}
$$
For a given $n$-dimensional vector $x$, we let 
$x^{\otimes d}$ denote the tensor of all the monomials of order exactly $d$, and 
$x^{\otimes \leq d}$ denote the tensor of all non-constant monomials of order $d$ or less. Thus a polynomial threshold function (or gate), can be viewed as a linear function (or gate) applied to the corresponding tensors. 

Next, consider that the vector $x$ is a random vector with i.i.d. symmetric Bernoulli components. 
Note that in this case $x^I$ is also a symmetric Bernoulli random variable for any non-empty $I \subset [n]$.
Furthermore, for any pair of distinct subsets 
$I$ and $J$ the variables   $x^I$ and $x^J$ are
independent, i.e. there is pairwise independence but not global independence. 
Using the results from Section~\ref
{sec:mainth} leads to the following corollaries, stated first informally and then more formally. 

\begin{corollary}[Informal]			\label{thm: subgaussian informal polynomial}
  If $x \in \K^n$ has i.i.d symmetric Bernoulli components and all coordinates of $y \in \{0,1\}^m$ take value $1$ 
  with probabilities at most $q$, and $Kq \gg \log m$, then the condition 
  $$
  Kq \log(Km) \log(1/q) \ll {n\choose \leq d} \quad
  ({\rm resp.} \;\;  Kq \log(Km) \log(1/q) \ll {n\choose d})
  $$
  guarantees that all data points $(x_k,y_k)$ 
  can be realized by a polynomial (resp. homogenous polynomial) threshold map $F$ of degree $d$. 
\end{corollary}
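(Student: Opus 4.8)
The plan is to reduce to the linear case already settled by Theorem~\ref{thm: subgaussian}. A degree-$d$ polynomial threshold function $x \mapsto h\big(\sum_{1 \le |I| \le d} w_I x^I - b\big)$ is literally a linear threshold function applied to the lifted feature vector $\tilde x \coloneqq x^{\otimes \leq d} \in \R^N$ with $N = \binom{n}{\leq d}$ (and, in the homogeneous case, to $\tilde x \coloneqq x^{\otimes d} \in \R^N$ with $N = \binom{n}{d}$), the monomial coefficients $w_I$ playing the role of the linear weights. So it suffices to apply Theorem~\ref{thm: subgaussian} to the sample $(\tilde x_k, y_k)_{k=1}^{K}$ with $N$ in place of $n$. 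First I would verify the hypotheses on $\tilde x$. Since $x$ has i.i.d. Rademacher coordinates, every monomial $x^I$ with $I \ne \emptyset$ is again a symmetric $\pm 1$ variable, so $\E \tilde x = 0$; and all coordinates of $\tilde x$ lie in $\{-1,1\}$, so $\tilde x$ is sub-gaussian with some norm $\alpha = \alpha(n,d) \coloneqq \norm{\tilde x}_\psitwo$. The decisive point is that the Euclidean norm of $\tilde x$ is \emph{deterministic}: $\norm{\tilde x_k}_2^2 = \sum_I (x_k^I)^2 = N$ for every realization. Hence in Theorem~\ref{thm: subgaussian} one may take $\gamma = \beta = 1$, and the norm window $\gamma \sqrt N \le \norm{\tilde x_k}_2 \le \beta \sqrt N$ is satisfied by \emph{every} data point; this is why the conclusion covers \emph{all} pairs $(x_k, y_k)$, with no exceptional set.

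With these choices $\alpha^2 \beta^2/\gamma^4 = \alpha(n,d)^2$, so condition \eqref{eq: Kq subgaussian} becomes $C\,\alpha(n,d)^2\,Kq_i \log(Km_0)\log\frac{2}{q_i(1-q_i)} \le cN$. Picking $m_0$ with $Kq_i \ge C\log m_0$ and $m_0 \to \infty$ — possible because $Kq \gg \log m$ — gives $\log(Km_0) \asymp \log(Km)$ and success probability $1 - 2m/m_0 = 1-o(1)$; in the regime of interest $q_i \le q$ is small, so $\log\frac{2}{q_i(1-q_i)} \lesssim \log(1/q)$, and the monotonicity of $t \mapsto t\log(1/t)$ lets one replace $q_i$ by $q$. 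Thus, up to the factor $\alpha(n,d)^2$, the hypothesis is exactly $Kq \log(Km)\log(1/q) \ll \binom{n}{\leq d}$ (resp.\ $\binom{n}{d}$). The Hebb matrix $W = \sum_k y_k \tilde x_k^\tran$ furnished by Theorem~\ref{thm: subgaussian} translates back into the coefficient rule $w_I(i) = \sum_k y_k(i)\, x_k^I$ — a Hebbian accumulation over the monomial features — and admissible biases are $\tfrac12 N < b(i) < N$ by \eqref{eq: b subgaussian}.

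The only genuinely nontrivial ingredient, and the step I expect to be the main obstacle, is controlling the sub-gaussian norm $\alpha(n,d) = \norm{x^{\otimes \leq d}}_\psitwo$. For $d = 1$ it is bounded — Appendix~\ref{a: subgauss0} even identifies the limiting value $\sqrt 8/\sqrt 3$ — which gives the stated clean condition. For $d \ge 2$ this is no longer true: the one-dimensional marginals $\ip{u}{x^{\otimes d}} = \sum_{|I|=d} u_I x^I$ are normalized degree-$d$ Rademacher chaos, which by hypercontractivity are only sub-$\psi_{2/d}$, so $\alpha(n,d)$ grows with $n$, and the trivial coordinate bound $\alpha(n,d) \le \sqrt N$ is far too lossy to be usable in \eqref{eq: Kq subgaussian}. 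Making the corollary precise for $d \ge 2$ therefore requires a sharp estimate — presumably $\alpha(n,d) \lesssim_d \sqrt{\log n}$, or at worst a small power of $\log n$, obtained via chaos tail / decoupling bounds — and ideally the exact asymptotics, as in the $d=1$ case; this is precisely the point the surrounding discussion flags as open. An alternative route that isolates the same difficulty is a direct signal-to-noise analysis as in the proof of Theorem~\ref{thm: binomial}: the signal is $y_1 \ip{\tilde x_1}{\tilde x_1} = y_1 N$ exactly, while the noise $\sum_{k \ge 2:\, y_k = 1} \ip{\tilde x_k}{\tilde x_1}$ is, conditionally on $x_1$, a sum of $\approx Kq$ independent mean-zero chaoses of degree at most $d$ and variance $N$; a Rosenthal/Bernstein-type bound for such sums yields $|\textrm{noise}| \ll N$ as soon as $Kq\log(Km) \ll N$, modulo the same chaos-tail corrections — so the difficulty is the same, merely repackaged.
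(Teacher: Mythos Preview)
Your proposal is correct and takes essentially the same route as the paper: lift $x$ to the monomial tensor $\tilde x = x^{\otimes \leq d}$ (resp.\ $x^{\otimes d}$), observe that $\tilde x$ is mean-zero, sub-gaussian, and has \emph{deterministic} Euclidean norm so that the window in Theorem~\ref{thm: subgaussian} is satisfied by every data point with $\gamma=\beta=1$, and then invoke that theorem with $N=\binom{n}{\leq d}$ (resp.\ $\binom{n}{d}$) in place of $n$. The paper's justification is exactly this two-observation reduction (stated after the formal Corollary~\ref{thm: subgaussian polynomial}), and it too leaves the dependence on $\alpha(n,d)$ for $d\ge 2$ as an open problem --- your discussion of that obstacle, and the alternative direct signal-to-noise route, is more detailed than what the paper provides but consonant with it.
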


\begin{corollary}[Formal]			\label{thm: subgaussian polynomial}
  Assume that $x \in \K^n$ has i.i.d symmetric Bernoulli components 
  and $y$ is a random vector in $\{0,1\}^m$. 
  Denote $\a \coloneqq \norm{x^{\otimes \leq d}}_\psitwo$ 
(resp.  $\a \coloneqq \norm{x^{\otimes d}}_\psitwo$ )  
  and $q_i \coloneqq \Pr{y(i)=1}$, $i = 1,\ldots,m$.
  Let $m_0 \ge m$ be such that $Kq_i \ge C\log m_0$ for all $i$.
    Consider $K$ data points $(x_k,y_k)$, $k=1,\ldots,K$ 
  sampled independently from the distribution of $(x,y)$ with $K$ satisfying:
   \begin{equation}		\label{eq: Kq binaryd}
  C  \alpha^2
  Kq_i \log(Km_0) \log \frac{2}{q_i(1-q_i)} \le c
  {n \choose \leq d},
    \quad i = 1,\ldots,m
  \end{equation}
or, respectively in the homogeneous case,

  \begin{equation}		\label{eq: Kq binarydh}
  C  \alpha^2
  Kq_i \log(Km_0) \log \frac{2}{q_i(1-q_i)} \le c
  {n \choose d},
    \quad i = 1,\ldots,m
  \end{equation}
Then, with probability at least $1-2m/m_0$,
  there exists a 
  polynomial (resp. homogeneous polynomial) threshold map $F$ of degree $d$ 
$F \in \mathcal{T}^d(n,m)$ such that $  F(x_k) = y_k$. 
 \end{corollary}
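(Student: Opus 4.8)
The plan is to realize a degree-$d$ (resp.\ homogeneous degree-$d$) polynomial threshold map on $x \in \K^n$ as an ordinary \emph{linear} threshold map acting on the lifted monomial vector, and then simply invoke Corollary~\ref{thm: subgaussian binary}. Concretely, I would set $z \coloneqq x^{\otimes \leq d} \in \R^N$ with $N \coloneqq \binom{n}{\leq d}$ in the general case, and $z \coloneqq x^{\otimes d} \in \R^N$ with $N \coloneqq \binom{n}{d}$ in the homogeneous case; the coordinates of $z$ are precisely the monomials $x^I$ with $1 \le \abs{I} \le d$ (resp.\ $\abs{I}=d$). If $G = h(W\cdot - b) \in \mathcal{T}(N,m)$ is a linear threshold map on $\R^N$, then $F(x) \coloneqq G(x^{\otimes \leq d})$ has $i$-th coordinate $h\big(\sum_I w_{iI}x^I - b(i)\big)$ and hence lies in $\mathcal{T}^d(n,m)$ (resp.\ in its homogeneous subclass), with $b$ playing the role of the bias term that the definition of a (homogeneous) polynomial threshold map permits. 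So it suffices to produce a suitable $G$ fitting the lifted data $(z_k,y_k)$.

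First I would verify that the lifted vector $z$ meets the hypotheses of Corollary~\ref{thm: subgaussian binary} in ambient dimension $N$. Since $x$ has i.i.d.\ symmetric Bernoulli coordinates, each $x^I = \prod_{i\in I}x_i$ is again a $\pm1$ symmetric Bernoulli random variable, so $z \in \K^N$, $\E z = 0$, and $\norm{z}_2 = \sqrt{N}$ holds deterministically. Because the coordinates of $z$ are bounded by $1$, $z$ is sub-gaussian with some finite norm $\alpha = \norm{z}_\psitwo$, which is exactly the parameter appearing in the statement. Crucially, Corollary~\ref{thm: subgaussian binary} (and Theorem~\ref{thm: subgaussian} behind it) assumes \emph{no independence} among the coordinates of the input vector, which is what makes it legitimate to feed it the monomial coordinates $x^I$: these are only pairwise independent, since for $I\neq J$ one has $\E x^I x^J = \E x^{I\triangle J}=0$ but higher-order products need not factor. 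Likewise no independence between $z$ and $y$ is required, and the samples are i.i.d. With $n$ replaced by $N$ and $\alpha(n)$ replaced by $\alpha$, the assumption \eqref{eq: Kq binary} of Corollary~\ref{thm: subgaussian binary} reads $C\alpha^2 Kq_i\log(Km_0)\log\frac{2}{q_i(1-q_i)} \le cN$, i.e.\ exactly hypothesis \eqref{eq: Kq binaryd} (resp.\ \eqref{eq: Kq binarydh} in the homogeneous case).

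Then I would conclude: Corollary~\ref{thm: subgaussian binary} yields, with probability at least $1-2m/m_0$, a linear threshold map $G \in \mathcal{T}(N,m)$ with $G(z_k)=y_k$ for all $k$, realized by the Hebb matrix $W = \sum_{k=1}^K y_k z_k^\tran$ and any bias $b$ with $\tfrac12 N < b(i) < N$; unfolding $z_k = x_k^{\otimes\leq d}$ as above produces the desired $F \in \mathcal{T}^d(n,m)$ (resp.\ a homogeneous degree-$d$ threshold map) with $F(x_k)=y_k$, computed by the Hebb rule on the lifted features. I do not expect a genuine obstacle here --- it is essentially a change of variables --- but two points need care: (i) checking that the invoked theorem really uses no more than the pairwise independence available among the monomial coordinates (it does, since it assumes no independence at all), and (ii) matching the additive bias of $G$ to the bias term allowed in a (homogeneous) polynomial threshold map. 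The one unsatisfying aspect, not an obstacle for the statement itself but worth flagging, is that the bound is informative only when $\alpha = \norm{x^{\otimes\leq d}}_\psitwo$ is well controlled: for $d=1$ it is an absolute constant, but estimating it for $d\ge2$, where the monomial coordinates are far from jointly independent, is itself a delicate problem.
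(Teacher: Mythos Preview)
Your proposal is correct and matches the paper's own proof: both lift $x$ to the monomial tensor $x^{\otimes\le d}$ (resp.\ $x^{\otimes d}$), observe that its coordinates lie in $\{-1,1\}$ with mean zero and deterministic norm, and then invoke the sub-gaussian result, which needs no independence among input coordinates. The paper appeals directly to Theorem~\ref{thm: subgaussian}, whereas you go through its binary specialization Corollary~\ref{thm: subgaussian binary}; this is the same argument, and your remarks on pairwise versus joint independence and on the uncontrolled growth of $\alpha$ for $d\ge2$ mirror the paper's own discussion.
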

  
The proof of this statement is an immediate application of 
Theorem \ref{thm: subgaussian}, noting that:
(1) the tensors $x^{\otimes \leq d}$ (resp. 
$x^{\otimes d}$) are sub-gaussian; and 
(2) $\norm {x^{\otimes \leq d}}^2=
{n\choose \leq d}-1$ (resp.
 $\norm {x^{\otimes d}}^2=
{n\choose d}$). 
However, the bounds above depend on the value of $\alpha=\alpha(n,d)$, the sub-gaussian norm of the corresponding Bernoulli tensors.
Thus open problems here include estimating the value of $\alpha(n,d)$, and finding better estimates associated with the phase transition for polynomial threshold maps with $d>1$, in both the asymptotic and non-asymptotic regimes
(see additional discussion at the end of Appendix 
\ref{a: subgauss0}).

\subsection{Neuronal Capacity and Storage}

Finally, it is useful to view the results in this paper in terms of neuronal capacity, storage, and information theory where neural learning is seen as a communication process whereby information is transferred from the training data to the synaptic weights.
The amount of information that can be communicated, essentially the capacity of the channel, can be estimated into two different ways, one at each end of the channel. At the synaptic end, we can investigate how much information can be stored in the synapses and at the data end, we can investigate how much information can be extracted from the training set. The apparent paradox alluded to in Section \ref{sec:basic} is that in the case of sparse functions, information seems to decrease at the synaptic end, but to increase at the training data end. 
We now treat these questions more precisely by defining and comparing different notions of storage and capacity. 

For simplicity, we look at the $A(n,1)$ Boolean architectures, but the same ideas can be extended to other architectures, including $A(n,m)$ maps, as well as to non-Boolean cases. Thus in general we assume that we are considering a class ${\mathcal C}$ of Boolean functions of $n$ variables. Of particular interest here are the cases where the Boolean functions are linear threshold gates, and the training sets have targets that are sparse.
At the level of the class itself, we can first define the cardinal capacity.

\subsubsection{The Synaptic View: the Cardinal Capacity}
The cardinal capacity $C$ of  ${\mathcal C}$ is defined by:

$$
C({\mathcal C})=\log_2 \vert {\mathcal C}\vert$$
The capacity can be interpreted as the minimum average number of bits required to communicate  an element of $\mathcal C$ in a very long message consisting of a random sequence of elements in $\mathcal C$ taken with a random uniform distribution (which corresponds to the worst case in terms of the number of required bits). In the case of linear threshold gates, it can be viewed as the number of bits that must be ``communicated'' from the world (i.e. the training set) to the synaptic weights, and stored in the synaptic weights in order to select a specific input-output function. The set of all Boolean functions has capacity $2^n$. The set of all $p$-sparse Boolean functions has obviously a small cardinal capacity given by 
$\log_2 {2^n \choose p2^n}$. The set $ {\mathcal T}(n,1)$
of all linear threshold gates of $n$ variables has capacity $\log_2   \vert  {\mathcal T}(n,1) \vert \approx  n^2$ (\cite{baldi2019capacity}and references therein). The work presented here leads to an
interesting open question: what is the fraction of $p$ sparse Boolean functions that can be implemented by linear threshold gates? Or, equivalently, what is the fraction of linear threshold Boolean functions that are also $p$-sparse? And obviously a similar question can be posed for polynomial threshold gates of degree $d>1$.

If the linear threshold functions where to intersect the $p$ sparse Boolean functions roughly in the same way as all other Boolean functions do as a function of $p$, then one would conjecture that the number of $p$-sparse linear threshold gates is approximately given by:

$$
\vert {\mathcal T}_p(n,1) \vert \approx 2^{n^2} \frac{{2^n \choose p2^n}}{2^{2^n}}
\label{eq:}
$$

It is worth noting, that the value of $\vert {\mathcal T}_p(n,1) \vert $ is known exactly in some simple cases corresponding to the lowest values of $p$. In particular:

$$
\vert {\mathcal T}_{2^{-n}}(n,1) \vert =2^n
$$
since it is always possible to linearly separate one vertex of the hypercube from all the other vertices. Likewise,

$$
\vert {\mathcal T}_{2^{-(n-1)}}(n,1) \vert =
\frac{n2^n}{2}
$$
since two vertices can be linearly separated if and only if they are adjacent. And similarly for $p=3/2^n$ and $p=4/2^n$ (e.g. four vertices can be linearly separated if and only if they form a face).

Now we look at the other end of the communication channel, at the information contained in the data, which itself can be captured using different notions, such as the VC dimension, the discriminant dimension, and the training capacity.

\subsubsection{VC dimension} 
The VC dimension $V$ of ${\mathcal C}$ is the size of the largest set $S$ of input vectors  that can be shattered by ${\mathcal C}$:

$$
V({\mathcal C})=\max_{S \in \H^n} \vert S \vert : S \; \textnormal{can be be shattered}
$$
Thus obviously we have: $2^V \leq 2^C=\vert {\mathcal C }\vert$. In addition, the 
Sauer-Shelah lemma gives the upper bound:

$$
2^V \leq  2^C  \leq {2^n \choose \leq V}
$$
where ${2^n \choose \leq V}$ denotes the sum of 
all binomial terms of the form 
${2^n \choose k}$ with $k \leq V$.
The VC dimension of all Boolean functions of $n$ variables is $2^n$. The VC dimension of all $p$-sparse Boolean functions is $p2^n$.
The VC dimension of all linear threshold gates is $n+1$, which raises another problem:  What is the VC dimension of the set ${\mathcal T}_p(n,1)$ of all $p$-sparse linear threshold gates? 

\subsubsection {Discriminant Dimension}
The discriminant dimension $D$ of $\mathcal C$
is the size of the smallest set $S$ of input vectors that can be used to discriminate the elements of $
\mathcal C$, i.e. no two elements of $\mathcal C$ behave identically on this set $S$:

$$
D({\mathcal C})=\min_{S \in \H^n} \vert S \vert : \textnormal{no two elements of the class behave identically on}\; S
$$
To communicate a long sequence of elements of $\mathcal C$, in the worst case of a uniform distribution over $\mathcal C$, we: (1) first pay a fixed cost by communicating the minimal discriminant data set $S_{min}$; and then (2) encode each element $f$ of the sequence by the $D=\vert S_{min} \vert$ bits describing the behavior of $f$ on $S_{min}$. Thus asymptotically $D$ bits are sufficient to communicate a function in $\mathcal C$ and thus: $C\leq D$. The discriminant dimension of all Boolean functions of $n$ variables is $2^n$. The discriminant dimension of all $p$ sparse Boolean functions is also $2^n$. 
This leads to two open questions of determining the discriminant dimension for 
${\mathcal T} (n,1)$ and ${\mathcal T}_p (n,1)$. For linear threshold gates, the discriminant dimension is at least $n^2$.
If $D({\mathcal T})$ is the discriminant dimension for linear threshold functions, then one may conjecture that the number of $p$-sparse linear threshold gates is approximately given by:

$$
\vert {\mathcal T}_p(n,1) \vert \approx
{{D({\mathcal T}) \choose p D({\mathcal T})  }}
\label{eq:}
$$
assuming that in general $p$-sparse linear functions behave in a typical way on the discriminant set, i.e. are $p$-sparse on the discriminant set.

\subsubsection{Training Capacity}
The training capacity aims to measure the size of the largest training set $S$ that can be learnt/realized by a given learning system. This notion can only make sense if a distribution ${\mathcal D}$ is defined over possible training sets (otherwise the size of largest set is trivially $2^n$ for all non-empty ${\mathcal C}$.  A number of variations on the definition of training capacity are possible depending on various factors such as: (1) the assumptions on the distribution $\mathcal D$ of the training data:(2) whether one allows a fraction $\delta$ of the possible training sets to be un-realizable; and (3) whether one allows an error rate of up to $\epsilon$ on the data sets that are realizable.  Thus in general we may denote the training capacity by:

$$
K_{\delta,\epsilon}^{\mathcal D}
({\mathcal C})
=\max_{S \in \H^n,{\mathcal D}} \vert S \vert : \textnormal {with probability at least}\; 1-\delta \;  S \;\textnormal {can be be learnt with error at most} \epsilon
$$
For instance:

$$
K_{0,0}^{\mathcal U}
({\mathcal C})
=\max k : \textnormal {every input-ouput data set of size} \; k \; \textnormal {can be realized exactly}
$$
where $U$ denotes the uniform distribution. 

The distribution $D$ plays an important role.
If the inputs and the targets are i.i.d. with a symmetric Bernouilli distribution, then the training capacity of a linear threshold gate
is approximately $n$. However, if the targets are $p$ sparse, our results show that it is higher.

\appendix
\section{A round core of a Gaussian polytope: Proof of Theorem~\ref{thm: core}}	\label{a: core}

Assume that the Gaussian polytope
$$
P \coloneqq \conv(x_1,\ldots,x_N)
$$
does not contain the ball $r B(n)$, for some $r>0$. 
Then there exists a point $x_0 \in \P$ with $\norm{x_0} \le r$. 
This point must be separated from $P$ by a hyperplane, and in particular, by some face 
of the polytope $P$. 

To express this condition analytically, note that the points $x_i$ are in general position in $\R^n$ almost surely.
In particular, every subset $\{x_i :\; i \in I\}$ of $n$ points spans an affine hyperplane in $\R^n$. 
We can parametrize this hyperplane by its unit normal $h_I \in \R^n$ and an offset $a_I$,
always choosing the direction of $h_I$ so that $a_I \ge 0$. 
In other words, for every subset $I \subset
 [N]$ (where $[N] = \{1,\ldots,N\}$)
 with $\abs{I} = n$
there exist $h_I \in \R^n$ and $a_I \ge 0$ such that
$$
\Span \left( x_i:\; i \in I \right) = \left\{ x \in \R^n :\; \ip{h_I}{x} = a_I \right\}.
$$

When $x_0$ is separated from $P$ by a face of $P$, there exists a subset 
$I \subset [N]$ of size $\abs{I}=n$ such that the function $f(x) = \ip{h_I}{x}-a_I$ vanishes
on all points $x_i$, $i \in I$, takes the same sign on all other points $x_i$, 
and takes the opposite sign on $x_0$. 
In other words, one of the following two alternatives must happen:
\begin{gather}
\ip{h_I}{x_i} < a_I < \ip{h_I}{x_0} \quad \text{for all } i \in I^c; 	\label{eq: x0 far}\\
\ip{h_I}{x_i} > a_I > \ip{h_I}{x_0} \quad \text{for all } i \in I^c.	\label{eq: x0 close}
\end{gather}

Suppose \eqref{eq: x0 far} occurs. Then, since $\ip{h_I}{x_0} \le \norm{x_0}_2 \le r$, we have
\begin{equation}	\label{eq: ip small}
\ip{h_I}{x_i} < r \quad \text{for all } i \in I^c.
\end{equation}
If, alternatively, \eqref{eq: x0 close} occurs, then, since $a_I \ge 0$, we have
\begin{equation}	\label{eq: ip large}
\ip{h_I}{x_i} > 0 \quad \text{for all } i \in I^c.
\end{equation}
Summarizing, we have shown that if $P \not \supset rB(n)$, 
there exists $I \subset [N]$, $\abs{I} = n$, such that either \eqref{eq: ip small} or \eqref{eq: ip large} holds.

Fix $I$ and condition on the random vectors $x_i$, $i \in I$. This fixes the unit normal $h_I$.
Thus $\ip{h_I}{x_i}$, $i \in I^c$, are $N-n$ independent $N(0,1)$ random variables, 
and so we can compute the conditional probability
$$
\Pr{ \text{\eqref{eq: ip small} holds} } = \left( \Pr{g \le r} \right)^{N-n}, 
\quad \text{where} \quad g \sim N(0,1).
$$
Similarly, 
$$
\Pr{ \text{\eqref{eq: ip large} holds} } 
= \left( \Pr{g > 0} \right)^{N-n}
\le \left( \Pr{g \le r} \right)^{N-n}
$$
using symmetry and since $r > 0$.

Running the union bound over all subsets $I \subset [N]$, $\abs{I} = n$ and 
lifting the conditioning over $x_i$, $i \in I$, we conclude that 
\begin{equation}	\label{eq: P core}
\Pr{P \not \supset rB(n)} \le 2 \binom{N}{n} \left( \Pr{g < r} \right)^{N-n}.
\end{equation}

It remains to show that this quantity is bounded by $e^{-n}$ if we set
$$
r \coloneqq \sqrt{2 \log \Big( \frac{N}{n} \Big) (1-\e)}.
$$
To do so, we can use the following known Gaussian tail bound:
$$
\Pr{g \ge r} \ge \Big( \frac{1}{r} - \frac{1}{r^3} \Big) \cdot \frac{1}{\sqrt{2\pi}} e^{-r^2/2},
$$
which can be found in \cite[Theorem~1.4]{durrett2019probability} and \cite[Proposition~2.1.2]{vershynin2018high}.
Recall that we assume that $N \ge C(\e) n$ with $C(\e)$ suitably large. 
Thus we can make $r$ suitably large in terms of $\e$ and simplify the above bound to 
$$
\Pr{g \ge r} 
\ge \exp \Big[ -\Big(1 + \frac{\e}{2} \Big) \frac{r^2}{2} \Big]
\ge \exp \Big[ -\Big(1 - \frac{\e}{2} \Big) \log \Big(\frac{N}{n} \Big) \Big]
= \Big(\frac{n}{N} \Big)^{1-\e/2}.
$$
Then 
\begin{align*} 
\left( \Pr{g < r} \right)^{N-n}
&\le \Big[ 1 - \Big(\frac{n}{N} \Big)^{1-\e/2} \Big]^{N/2}
	\quad \text{(since we can assume that $N \ge 2n$)}\\
&\le \exp \Big[ -\frac{N}{2} \Big(\frac{n}{N} \Big)^{1-\e/2} \Big]
	\quad \text{(since $1-z \le e^{-z}$ for $z \ge 0$)}\\
&= \exp \Big[ -\frac{n}{2} \Big(\frac{N}{n} \Big)^{\e/2} \Big].
\end{align*}

Next, we can use the bound $\binom{N}{n} \le (eN/n)^n$ (see e.g. \cite[Exercise~0.0.5]{vershynin2018high}) 
and obtain
$$
\binom{N}{n} \left( \Pr{g < r} \right)^{N-n}
\le \exp \Big[ n \Big( \log \Big(\frac{eN}{n}\Big) - \frac{1}{2} \Big(\frac{N}{n} \Big)^{\e/2} \Big) \Big]
\le \frac{1}{2} \exp(-n).
$$
In the last step we used the assumption $N \ge C(\e) n$ with a suitably large $C(\e)$. 
Substitute this into \eqref{eq: P core} to complete the proof.
\qed

\section{Sub-Gaussian Distributions}
\label{a: subgauss0}

\subsection{Definition and Basic Properties.}
A random variable $X$ is sub-gaussian if it satisfies any of the following four (or five) equivalent properties. In the statements of these properties, the parameters $K_i>0$ differ from each other by at most an absolute constant factor.
\begin{enumerate}
\item The tail of $X$ is dominated by a Gaussian tail, that is: 

$$
{\P}(\vert X\vert \geq t) \leq 2 \exp 
(-t^2/K_1^2) \quad {\rm for} \; {\rm all} \quad t \geq 0
$$
\item The moments of $X$ satisfy:

$$
\vert\vert X \vert \vert_{L^p}=
{\E}\vert X \vert^p)^{1/p} \leq K_2 \sqrt p
\quad {\rm for} \; {\rm all} \quad p\geq 1
$$

\item The moment generating function of 
$X^2$ satisfies:

$$
{\E}\exp(\lambda^2X^2) \leq \exp (K_3^2 \lambda^2) \quad 
{\rm for} \; {\rm all}\; \lambda \; {\rm such} \; {\rm that}
\vert \lambda \vert \leq \frac{1}{K_3}
$$

\item
The moment generating function of $X^2$ is bounded at some point in the sense that:

$$
{\E} \exp (X^2/K_4^2) \leq 2
$$

\item Furthermore, if $E(X)=0$ then properties 1-4 are also equivalent to the following fifth property. The moment generating function of $X$ satisfies:

$$
{\E}\exp (\lambda X) \leq \exp(K_5^2 \lambda^2) \quad {\rm for} \; {\rm all} \quad \lambda \in \R
$$
\end{enumerate}
\noindent
The sub-gaussian norm of $X$, denoted by
$\vert \vert X\vert \vert_{\Psi_2}$ is defined by:

$$
\vert\vert X \vert\vert_{\Psi_2} =
\inf \left \{ 
t>0: \; {\E} (\exp(X^2/t^2)) \leq 2
\right \}
$$

A random vector $X$ in $\R^n$ is sub-gaussian if the one-dimensional marginals $<X,x>$ are sub-gaussian random variables for all 
$x \in \R^n$. The sub-gaussian norm of $X$ is defined as:

$$
\vert \vert X \vert \vert_{\Psi_2}
=
\sup_{x \in S^{n-1}}
\vert \vert <X,x> \vert\vert_{\Psi_2}
$$
where $S^{n-1}$ is the sphere of radius 1 in 
$\R^n$.

\subsection{Sub-gaussian norm of symmetric Bernoulli vectors}
In connection with Corollary 
\ref{thm: subgaussian informal binary}, we assume that $x=(x_1,\ldots,x_n)$ and the $x_i$ are i.i.d. Bernouilli $\pm1$ random variables with probability $p=0.5$. The sub-gaussian norm of $x$ is given by:

\begin{equation}
\alpha(n)=\vert \vert x \vert \vert_{\Psi 2} =
\sup_{u \in S^{n-1}} \vert \vert <u,x> \vert\vert_{\Psi_2}
=\sup_{u \in S^{n-1}} \inf_{t>0}
\{{\E} \exp(<u,x>^2/t^2) \leq 2 \}
\label{eq:binarysub1}
\end{equation}
where $S^{n-1}$ is the sphere of radius 1 in 
$\R^n$.
Now we can write:

\begin{equation}
{\E} \exp(<u,x>^2/t^2)=\frac{1}{2^n}
\sum_{x \in \K^n} \exp(<u,x>^2/t^2)
\label{eq:binarysub2}
\end{equation}
Note that for fixed $u$ the expectation is a continuous, strictly monotone, decreasing function of
$t \in (0,+\infty)$, decreasing in value from $+\infty$ to $0$. Thus the value $2$ is achieved by the expectation for a single value of $t$ and $\inf$ can be replaced by $\min$ in Equation \ref{eq:binarysub1}. The corresponding value of $t$ varies continuously as $u$ is varied over the closed set 
 $S^{n-1}$. Thus the maximum value of the corresponding $t$ is achieved on $S^{n-1}$ (at multiple points for symmetry reasons)
 and $\sup$ can be replaced by 
$\max$ in Equation \ref{eq:binarysub1}.
The following theorem provides the bound and asymptotic value of the sub-gaussian norm.

\begin{theorem}			\label{thm: subgaussian bernoulli}
Let $Z$ be a standard normal random variable $Z \sim N(0,1)$ and $x=(x_1,\ldots,x_n)$ be a vector of i.i.d. symmetric Bernoulli random variables.
Fix $u\in S^{n-1}$ and let $X=<u,x>$.
Then, for any $\s>0$, we have:
  $$
  \E \exp(\s^2 X^2/2) \le \E \exp(\s^2 Z^2/2) = \frac{1}{\sqrt{1-\s^2}}.
  $$
Furthermore, the sub-gaussian norm
$\alpha(n)$ of $x$ satisfies:

$$
\alpha(n) \leq
\frac{\sqrt 8}{\sqrt 3}
\quad {\rm and} \quad \alpha(n) \to 
\frac{\sqrt 8}{\sqrt 3}
\label{eq:alpha}
$$
as $n \to \infty$.
\end{theorem}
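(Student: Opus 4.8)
The plan is to prove the moment generating function comparison first and then read off both statements about $\alpha(n)$ from it. For the comparison I would use the elementary Gaussian identity $e^{a^2/2} = \E_g e^{ag}$, valid for every $a\in\R$ with $g \sim N(0,1)$. Taking $a = \sigma X$ and exchanging the order of expectation (legitimate by Tonelli, the integrand being nonnegative),
$$
\E \exp(\sigma^2 X^2/2) = \E_x \E_g \exp\big(\sigma g \ip{u}{x}\big) = \E_g \prod_{i=1}^n \E_{x_i} \exp(\sigma g u_i x_i) = \E_g \prod_{i=1}^n \cosh(\sigma g u_i),
$$
where the factorization uses independence of the $x_i$ and $\E e^{t x_i} = \cosh t$. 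The bound $\cosh t \le e^{t^2/2}$ (immediate by comparing Taylor coefficients) together with $\sum_i u_i^2 = 1$ gives $\prod_i \cosh(\sigma g u_i) \le \exp(\sigma^2 g^2/2)$, so
$$
\E \exp(\sigma^2 X^2/2) \le \E_g \exp(\sigma^2 g^2/2) = \frac{1}{\sqrt{1-\sigma^2}}
$$
for $\sigma < 1$, the last equality being the standard formula $\E e^{\lambda Z^2} = (1-2\lambda)^{-1/2}$ at $\lambda = \sigma^2/2$; for $\sigma \ge 1$ both sides are $+\infty$ and there is nothing to prove. This is exactly the claimed inequality.

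For the upper bound on $\alpha(n)$ I would substitute $\sigma^2 = 2/t^2$: for every $t > \sqrt 2$ and every $u \in S^{n-1}$,
$$
\E \exp\big(\ip{u}{x}^2/t^2\big) \le \Big(1 - \tfrac{2}{t^2}\Big)^{-1/2},
$$
and the right-hand side is at most $2$ precisely when $1 - 2/t^2 \ge 1/4$, i.e. $t \ge \sqrt{8/3}$. By the definition of the sub-gaussian norm this yields $\norm{\ip{u}{x}}_\psitwo \le \sqrt 8/\sqrt 3$ for all $u \in S^{n-1}$, hence $\alpha(n) \le \sqrt 8/\sqrt 3$.

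To see that $\alpha(n)$ actually approaches this value I would test the most spread-out direction $u^{(n)} \coloneqq n^{-1/2}(1,\ldots,1)$, for which $X_n \coloneqq \ip{u^{(n)}}{x} = n^{-1/2}\sum_{i=1}^n x_i \Rightarrow Z \sim N(0,1)$ by the classical central limit theorem. Convergence in distribution does not by itself pass to the exponential moments $\E\exp(X_n^2/t^2)$, but the bound from the first step supplies the required uniform integrability: given $t > \sqrt 2$, choose $\delta > 0$ with $t^2 > 2(1+\delta)$; then $\sup_n \E\big[\exp(X_n^2/t^2)\big]^{1+\delta} = \sup_n \E \exp\big((1+\delta)X_n^2/t^2\big) < \infty$, so $\E \exp(X_n^2/t^2) \to \E \exp(Z^2/t^2) = (1-2/t^2)^{-1/2}$. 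The limit function $t \mapsto (1-2/t^2)^{-1/2}$ is continuous and strictly decreasing on $(\sqrt 2,\infty)$ with value $2$ exactly at $t = \sqrt{8/3}$; combining this with the fact noted above that $\norm{X_n}_\psitwo$ is the unique $t$ at which $\E\exp(X_n^2/t^2)=2$ and that $t' \mapsto \E\exp(X_n^2/t'^2)$ is monotone, a two-sided squeeze gives $\norm{X_n}_\psitwo \to \sqrt 8/\sqrt 3$. Since $\alpha(n) \ge \norm{X_n}_\psitwo$, we get $\liminf_n \alpha(n) \ge \sqrt 8/\sqrt 3$, which together with the upper bound gives $\alpha(n) \to \sqrt 8/\sqrt 3$.

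I expect the only genuine obstacle to be this last step — upgrading weak convergence of $X_n$ to convergence of the exponential moments, and then to convergence of the $\psi_2$ norm — but it is defused cleanly by the uniform moment bound from the first step; everything else is a short computation.
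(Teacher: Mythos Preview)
Your proof is correct and, for the inequality $\E\exp(\sigma^2 X^2/2)\le (1-\sigma^2)^{-1/2}$ and the bound $\alpha(n)\le\sqrt{8/3}$, it follows essentially the same route as the paper: the paper also invokes the Gaussian integral identity $\exp(\sigma^2 x^2/2)=\E_g e^{\sigma g x}$ (written there as an explicit integral), moves the expectation inside, and bounds $\E e^{\lambda X}$ via the elementary $\cosh t\le e^{t^2/2}$ inequality together with $\sum_i u_i^2=1$. You compress what the paper separates out as a Chernoff-type lemma into the main computation, but the mechanism is identical.

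Where you genuinely go beyond the paper is the convergence $\alpha(n)\to\sqrt{8/3}$. The paper's proof stops at the upper bound and declares the theorem proved; the limit claim is only supported afterwards by a heuristic Gaussian approximation giving a numerically close (but different) value. Your argument---testing the direction $u^{(n)}=n^{-1/2}(1,\ldots,1)$, using the CLT for $X_n\Rightarrow Z$, and then promoting weak convergence to convergence of $\E\exp(X_n^2/t^2)$ via the uniform-integrability bound already established in the first step---is the missing rigorous piece, and it is clean. The two-sided squeeze on $\|X_n\|_{\psi_2}$ from the strict monotonicity of $t\mapsto(1-2/t^2)^{-1/2}$ is the right way to conclude.
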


\begin{proof} [Proof of Theorem~\ref{thm: subgaussian bernoulli}]
The proof is based on the Chernoff bound on the moment generating function of $Z$ and $X$.

\begin{lemma}[Chernoff's bound]		\label{lem: chernoff}
  For any $\l \in \R$, we have 
  $$
  \E \exp(\l X) \le \E \exp(\l Z) = \exp(\l^2/2).
  $$
\end{lemma}
\noindent
To prove this bound, note that the identity for $Z$ is the basic formula for the moment generating function of the normal distribution. 
For $X$, we have
\begin{align*} 
\E \exp(\l X) 
&= \E \exp \Big( \sum_{i=1}^n u_i x_i \Big) \\
&= \prod_{i=1}^n \E \exp(\l u_i x_i)
		\quad \text{(by independence)} \\
&=\prod_{i=1}^n \cosh(\l u_i)
	\quad \text{(since $x_i = \pm 1$ with equal probabilities)} \\
&\le \prod_{i=1}^n \exp(\l^2 u_i^2/2)
	\quad \text{(since $\cosh(x) \le \exp(x^2/2)$ everywhere)} \\
&= \exp \Big( \sum_{i=1}^n \l^2 u_i^2/2 \Big) 
= \exp(\l^2/2)
	\quad \text{(by assumption on $u_i$)}
\end{align*}

Now to finish the proof of Theorem \ref{thm: subgaussian bernoulli}, we first note that the following identity holds for every $x \in \R$ and $\s>0$:
$$
\exp(\s^2 x^2/2) = \frac{1}{\s \sqrt{2\pi}} \int_{-\infty}^\infty e^{\l x} e^{-\l^2/2\s^2} \; d\l
$$
since each side represents the moment generating function of a $N(0,\s^2)$ random variable evaluated at $x$, i.e. $\E \exp(Yx)$ where $Y \sim N(0,\s^2)$. 
We then substitute $x=X$ and take expectation on both sides. This yields:
 
\begin{align*} 
\E \exp(\s^2 X^2/2) 
&= \frac{1}{\s \sqrt{2\pi}} \int_{-\infty}^\infty \E[e^{\l X}] \; e^{-\l^2/2\s^2} \; d\l \\
&\le \frac{1}{\s \sqrt{2\pi}} \int_{-\infty}^\infty e^{-\l^2/2} \; e^{-\l^2/2\s^2} \; d\l
	\quad \text{(by Lemma~\ref{lem: chernoff})} \\
&= \frac{1}{\s \sqrt{2\pi}} \int_{-\infty}^\infty e^{-\l^2/2b^2} \; d\l
	\quad \text{(where $b = \s/\sqrt{1-\s^2}$)} \\
&= \frac{b}{\s} = \frac{1}{\sqrt{1-\s^2}}.
\end{align*}
If we repeat the same computation for $Z$, the inequality (due to the application of Lemma~\ref{lem: chernoff}) becomes an equality and the first part of the theorem is proven. 
As a consequence, setting $\s^2 = 3/4$, we obtain: 
$$
\E \exp((3/8)X^2) \le \E \exp((3/8)Z^2) \le 2
$$
and thus:

$$
\norm{X}_\psitwo \le \norm{Z}_\psitwo \le \sqrt{8/3}.
$$
which completes the proof of Theorem
\ref{thm: subgaussian bernoulli}.
\end{proof}

Note, a naive Gaussian approximation to the exponent in Equation \ref{eq:binarysub2}, combined with a Lagrangian optimization argument showing that the corresponding maximal vectors have components of fixed magnitude $1/\sqrt n$, provides the estimate:

\begin{equation}
\alpha (n) \approx{\sqrt{\frac{1+\sqrt{1+(4(\ln 2) (n-1)/n)}}
{2 \ln 2}}
}
\approx
{\sqrt{\frac{1+\sqrt{1+(4(\ln 2)}}
{2 \ln 2}}
} 
\label{eq:binarysub4}
\end{equation}
which is fairly close to 
the true value ${\sqrt 8}/{\sqrt 3}$.

\subsection{Sug-gaussian norm of symmetric Bernoulli tensors}
Unlike the case $d=1$, here the numbers $\alpha(n,d)$ are not bounded as $n \to \infty$. 
To see this, let us allow for simplicity repetitions in the sets $I$ of indices defining the tensor. This makes the tensor $x^d$ have dimension $n^d$ (as opposed to $\binom{n}{d}$). With this in mind, for every vector $a$ in $\R^n$ we have: 
$X = <x^d,a^d> = <x,a>^d$. 
Let $a$ be the unit vector with all the same coefficients $1/\sqrt{n}$. By the Central Limit Theorem, $<x,a> \to  G$ where $G$ is $N(0,1)$. The convergence here is in distribution as $n \to \infty$. Thus:

$$E exp(X^2/t^2) -> E exp(G^{2d}/t^2) = \infty$$
for every $t>0$, as long as $d>1$. This shows that the sub-gaussian norm of $X$ is larger than $t$ (for large enough $n$). Since $t$ is arbitrary, it follows that the sub-gaussian norm of $ X$ goes to infinity. Using the same Central Limit Approximation used above, in the case of $d=1$, does not help in the case $d>1$.

\section{ Proof of Theorem~\ref{thm: subgaussian}}
\label{a: subgauss}
\medskip

Our proof of Theorem~\ref{thm: subgaussian} will be based on standard
facts about sub-gaussian distributions (see \cite{vershynin2018high}) 
and the following lemma.

\begin{lemma}[Conditioning sub-gaussian distributions]		\label{lem: subgaussian conditioned}
  Let $x$ be a sub-gaussian random vector taking values in $\R^n$.
  Then for any event $\EE$ with positive probability, we have 
  $$
  \norm{x}_{\psitwo(\cdot|\EE)} \le C \norm{x}_\psitwo \sqrt{\log \frac{2}{\P(\EE)}}.
  $$
\end{lemma}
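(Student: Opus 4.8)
The plan is to reduce the statement to a one-dimensional estimate and then apply Hölder's inequality to the moment generating function of $X^2$. First I would recall that, both with and without conditioning, the sub-gaussian norm of a random vector is the supremum of the sub-gaussian norms of its one-dimensional marginals:
$$
\norm{x}_{\psitwo(\cdot|\EE)} = \sup_{u \in S^{n-1}} \norm{\ip{x}{u}}_{\psitwo(\cdot|\EE)},
\qquad
\norm{x}_{\psitwo} = \sup_{u \in S^{n-1}} \norm{\ip{x}{u}}_{\psitwo}.
$$
Hence it is enough to prove the scalar version of the lemma: if $X$ is a sub-gaussian random variable with $\norm{X}_\psitwo = \a$ and $\EE$ is an event with $\d \coloneqq \Pr{\EE} \in (0,1]$, then $\norm{X}_{\psitwo(\cdot|\EE)} \le C \a \sqrt{\log(2/\d)}$, where $\norm{X}_{\psitwo(\cdot|\EE)} = \inf\{ t>0 :\ \E[\exp(X^2/t^2) \mid \EE] \le 2 \}$ and $\E[\,\cdot \mid \EE] = \E[\,\cdot\, \one_\EE]/\d$.

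For the scalar statement I would fix $t > 0$ and bound the conditional exponential moment of $X^2$ using Hölder's inequality. The case $\d = 1$ is trivial, since then $\EE$ occurs almost surely and the conditional norm equals $\a$. If $\d < 1$, set $r \coloneqq \log(2/\d)/\log 2 > 1$ and let $r'$ be its conjugate exponent; applying Hölder with $\exp(X^2/t^2)$ and $\one_\EE$ gives
$$
\E\big[ \exp(X^2/t^2)\, \one_\EE \big]
\le \big( \E \exp(r X^2/t^2) \big)^{1/r} \, \d^{\,1-1/r}.
$$
Now choose $t$ so that $t^2 = r \a^2$; then $\E \exp(r X^2/t^2) = \E \exp(X^2/\a^2) \le 2$ by the definition of $\norm{X}_\psitwo = \a$, and therefore
$$
\E\big[ \exp(X^2/t^2) \mid \EE \big]
= \d^{-1}\, \E\big[ \exp(X^2/t^2)\, \one_\EE \big]
\le 2^{1/r}\, \d^{-1/r}
= (2/\d)^{1/r}
= 2
$$
by the choice of $r$. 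Consequently $\norm{X}_{\psitwo(\cdot|\EE)} \le t = \a \sqrt{r} = \a \sqrt{\log(2/\d)/\log 2}$, which is the scalar claim with $C = 1/\sqrt{\log 2}$. Taking the supremum over $u \in S^{n-1}$ then yields the lemma.

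I do not expect a genuine obstacle here. The one point needing a little care is checking that the conditional sub-gaussian norm of a vector is still controlled by its one-dimensional marginals, so that the reduction to the scalar case is legitimate; the edge case $\d = 1$ (and, if one prefers, $\d$ close to $1$, where $r$ is close to $1$ but still an admissible Hölder exponent) is harmless. The only real idea is to split $\E[\exp(X^2/t^2)\, \one_\EE]$ by Hölder with exponent $r \asymp \log(1/\d)$, rather than using the trivial bound $\E[\exp(X^2/t^2)\, \one_\EE] \le \E \exp(X^2/t^2)$, which would only produce a factor $1/\d$ in place of the desired $\log(1/\d)$.
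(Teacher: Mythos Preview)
Your proof is correct, and it takes a genuinely different route from the paper's argument. Both proofs begin with the same reduction to the scalar case via one-dimensional marginals, but then diverge. The paper works with tail probabilities: it bounds $\Pr{X > t \mid \EE} \le \Pr{X>t}/\P(\EE) \le (2/q) e^{-ct^2}$, observes that this is at most $2e^{-ct^2/2}$ once $t$ exceeds the threshold $t_0 = \sqrt{(2/c)\log(1/q)}$, and handles $t<t_0$ trivially since $2e^{-t^2/2t_0^2}>1$ there; the conditional sub-gaussian norm then comes out as $\max(1,t_0) \lesssim \sqrt{\log(2/q)}$. You instead work directly with the defining exponential moment and split $\E[\exp(X^2/t^2)\one_\EE]$ by H\"older with exponent $r = \log(2/\d)/\log 2$, tuned so that the two factors recombine to exactly~$2$. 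Your approach is slicker and yields an explicit constant $C = 1/\sqrt{\log 2}$, whereas the paper's tail-splitting argument only tracks the constant implicitly. The paper's method, on the other hand, is perhaps more transparent about \emph{why} the logarithm appears: the conditional tail is the unconditional tail inflated by $1/\d$, and a Gaussian tail absorbs a multiplicative constant at the cost of shifting the effective scale by $\sqrt{\log(1/\d)}$.
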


In the statement of this lemma and thereafter, we write $\norm{x}_{\psitwo(\cdot|\EE)}$ to indicate 
that the sub-gaussian norm of $x$ is computed while conditioned on the event $\EE$. 

\begin{proof}
Taking the inner product of $x$ with a fixed unit vector,
we can reduce the problem to the case $n=1$ where $x$ is a random variable.
Furthermore, by homogeneity we can assume that $\norm{x}_\psitwo = 1$.
Then, denoting $q \coloneqq \P(\EE)$, we have 
$$
\Pr{x>t \,|\, \EE}
\le \frac{\Pr{x>t}}{\P(\EE)}
\le \frac{2e^{-ct^2}}{q}
\le 2e^{-ct^2/2}
$$
as long as 
$$
t \ge t_0 \coloneqq \sqrt{\frac{2}{c} \log \Big(\frac{1}{q}\Big)}.
$$
In the range where $t < t_0$, a trivial bound holds
$$
\Pr{x>t \,|\, \EE} \le 2 e^{-t^2/2t_0^2},
$$
because the right hand side is greater than $1$.
Combining the two bounds, we conclude by the definition of the sub-gaussian norm that
$$
\norm{x}_{\psitwo(\cdot|\EE)} 
\lesssim \max(1,t_0) 
\lesssim \sqrt{\log \frac{2}{q}}.
$$
The proof is complete.
\end{proof}

\bigskip

\begin{proof}[Proof of Theorem~\ref{thm: subgaussian}]
Let us first assume that $m=1$ and check that the map $F$ satisfies
$$
F(x_1) = y_1
$$
with high probability. Once we have done this, a union bound
over $K$ data points and $m$ coordinates of $y$ will finish the argument. 
When $m=1$, the function $F$ can be expressed as 
\begin{equation}	\label{eq: F subgaussian}
F(x) = h \left( \ip{w}{x}-b \right)
\quad \text{where} \quad
w = \sum_{k=1}^K y_k x_k.
\end{equation}

\medskip

{\bf Step 1. Decomposition into signal and noise.}
In order to prove that $F(x_1) = y_1$, let us expand $\ip{w}{x_1}$ as follows:
\begin{equation}	\label{eq: signal+noise subgaussian}
\ip{w}{x_1}
= y_1 \norm{x_1}_2^2 + \Bigip{\sum_{k=2}^K y_k x_k}{x_1}
\eqqcolon \textrm{signal} + \textrm{noise}.
\end{equation}

We would like to show that the signal to noise ratio is large.
To this end, consider the random set
$$
I \coloneqq \{ k:\; y_k = 1 \} \subseteq \{2,\ldots,K\}.
$$
Since $y_k$ are i.i.d. Bernoulli random variables with parameter $q$,
Bernstein's inequality (see e.g. \cite[Theorem~2.8.4]{vershynin2018high}) implies that
\begin{equation}	\label{eq: I size subgaussian}
\abs{I} \le 10Kq
\end{equation}
with probability at least $1-2\exp(-c_1Kq) \ge 1-1/m_0$. 
(The last bound follows from theorem's assumption on $Kq$ with a suitably 
large constant $C$.)
Condition on a realization of labels 
$y_2,\ldots,y_K$ satisfying \eqref{eq: J size}. 
Furthermore, condition on a realization of the random vector $x_1$ 
with moderate norm, namely
\begin{equation}	\label{eq: norm x1}
\gamma \sqrt{n} \le \norm{x_1}_2 \le \beta \sqrt{n}.
\end{equation}

Let us estimate the strength of the signal and noise in \eqref{eq: signal+noise subgaussian}.
If $y_1=0$, the signal is obviously zero, and when $y_1=1$, we have 
$$
\textrm{signal} = \norm{x_1}_2^2 \ge \gamma^2 n.
$$

\medskip

{\bf Step 2. Bounding the noise.}
To bound the noise term in \eqref{eq: signal+noise subgaussian}, 
let us use Lemma~\ref{lem: subgaussian conditioned}. It gives
\begin{equation}	\label{eq: xk conditioned}
\norm{x_k}_{\psitwo(\cdot|y_2,\ldots,y_K)} 
= \norm{x_k}_{\psitwo(\cdot|y_k)} 
\lesssim \alpha \sqrt{\log \frac{2}{q(1-q)}}, 
\quad k=2,\ldots,K.
\end{equation}
The equality in \eqref{eq: xk conditioned} is due to independence. 
The inequality in \eqref{eq: xk conditioned}
uses the fact that the events $\{y_k=0\}$ and $\{y_k=1\}$ have probabilities
with probabilities $1-q$ and $q$, both of which can be bounded below by $q(1-q)$.  

Using the approximate rotation invariance of sub-gaussian distributions
(see \cite[Proposition~2.6.1]{vershynin2018high}) followed by the bounds \eqref{eq: I size subgaussian} and \eqref{eq: xk conditioned}, 
we obtain
\begin{align*} 
\norm[3]{\sum_{k=2}^K y_k x_k}_{\psitwo(\cdot|y_2,\ldots,y_K)} 
  &= \norm[3]{\sum_{k \in I} x_k}_{\psitwo(\cdot|y_2,\ldots,y_K)}
  \lesssim \Bigg( \sum_{k \in I} \norm{x_k}_{\psitwo(\cdot|y_2,\ldots,y_K)}^2 \Bigg)^{1/2} \\
  &\lesssim \alpha \sqrt{Kq \log \frac{2}{q(1-q)}}.
\end{align*}
This implies that, conditioned on $x_1$ and $y_2,\ldots,y_k$,
the noise term in \eqref{eq: signal+noise subgaussian} is sub-gaussian: 
\begin{align} 
\norm{\textrm{noise}}_{\psitwo(\cdot|x_1,y_2,\ldots,y_K)} 
  &= \norm[3]{\Bigip{\sum_{k=2}^K y_k x_k}{x_1}}_{\psitwo(\cdot|x_1,y_2,\ldots,y_K)} 
  \le \norm[3]{\sum_{k=2}^K y_k x_k}_{\psitwo(\cdot|y_2,\ldots,y_K)} \cdot \norm{x_1}_2 \nonumber\\
  &\lesssim \alpha \sqrt{Kq \log \frac{2}{q(1-q)}} \cdot \beta \sqrt{n} \eqqcolon R. \label{eq: R}
\end{align}
By the definition of sub-gaussian norm, this yields the tail bound:
$$
\Pr{\abs{\textrm{noise}} > t \,\vert\, x_1,y_2,\ldots,y_K} 
\le 2\exp(-c_0 t^2/R^2)
\le \frac{1}{Km_0}
$$
if we choose 
$$
t \coloneqq C_1 \sqrt{\log(Km_0)} R
$$ 
with a suitably large constant $C_1$.
Thus, with (conditional) probability at least $1-1/(Km_0)$, 
the noise satisfies
$$
\abs{\textrm{noise}}
\le t \le \frac{1}{2} \gamma^2 n.
$$
The last bound follows from the definitions of $t$ and $R$ in \eqref{eq: R}
and the key assumption \eqref{eq: Kq subgaussian} with a suitably large constant $C$.

\medskip

{\bf Step 3. Estimating the signal-to-noise ratio.}
Lifting the conditioning on $x_1$ and $y_2,\ldots,y_K$, we conclude the following
with (unconditional) probability at least $1-1/m_0-1/(Km_0)$. 
If $y_1=0$ then $\textrm{signal}=0$, otherwise
$\textrm{signal} \ge \gamma^2 n$ as long as $x_1$ has moderate norm per \eqref{eq: norm x1}; 
the noise satisfies $\abs{\textrm{noise}} \le \frac{1}{2} \gamma^2 n$.

Putting this back into \eqref{eq: signal+noise subgaussian}, we see that 
if $y_1=1$ and $x_1$ has moderate norm per \eqref{eq: norm x1}, we have 
$$
\ip{w}{x_1} \ge \gamma^2 n - \frac{1}{2} \gamma^2 n 
= \frac{1}{2} \gamma^2 n
> b
$$
by the assumption \eqref{eq: b subgaussian} on $b$.
So $\ip{w}{x_1}-b$ is positive and thus, by \eqref{eq: F subgaussian}, 
$F(x_1)=1=y_1$.

If, on the other hand, $y_1=0$ then 
$$
\ip{w}{x_1} \le \frac{1}{2} \gamma^2 n
< b.
$$
So $\ip{w}{x_1}-b$ is negative and thus, by \eqref{eq: F subgaussian}, 
$F(x_1)=0=y_1$.
Thus, in either case, we have $F(x_1) = y_1$ as claimed. 

\medskip

{\bf Step 4. Union bound.}
We can repeat this argument for any fixed $k=1,\ldots,K$
and thus obtain $F(x_k) = y_k$ with probability at least $1-1/m_0-1/(Km_0)$. 
Now take a union bound over all $k=1,\ldots,K$. 
This should be done carefully: recall that the term $1/m_0$ 
in the probability bound appears because we wanted the  
set $I$ to satisfy \eqref{eq: I size}. 
The set $I$ obviously does not depend on our choice of a particular $k$;
it is fixed during the application of the union bound and 
the term $1/m_0$ does not increase in this process. 
Thus, we showed that the conclusion
$$
F(x_k) = y_k
\quad \text{for all} \quad k=1,\ldots,K
$$
holds with probability at least $1-1/m_0-K/(Km_0) = 1-2/m_0$. 

This completes the proof of the theorem in the case $m=1$. 
To extend it to general $m$, we apply argument above 
for each coordinate $i=1,\ldots,m$ of $y$ and finish by taking the 
union bound over all $m$ coordinates.
\end{proof}

\section*{Acknowledgement}
We would like to thank Dr. Sunil Gandhi for useful discussions on sparsity in the mouse visual system. Work in part supported by
grants NSF 1839429 and ARO 76649-CS.

\bibliography{math,nn,baldi,vershynin}
\bibliographystyle{plain}

\end{document}